\PassOptionsToPackage{dvipsnames}{xcolor}
\PassOptionsToPackage{frozencache}{markdown}
\documentclass[11pt,letterpaper]{article}
\usepackage[margin=1in]{geometry}
\usepackage[utf8]{inputenc}
\usepackage[T1]{fontenc} %
\usepackage{times}
\usepackage{mathpazo}
\linespread{1.09} %

\usepackage{upgreek}
\renewcommand{\gamma}{\upgamma}
\renewcommand{\pi}{\uppi}

\usepackage{markdown}
\markdownSetup{
  rendererPrototypes = {
    headingOne    = {\section*{#1}},
    headingTwo    = {\subsection*{#1}},
    headingThree  = {\subsubsection*{#1}},
    headingFour   = {\paragraph{#1}},
    headingFive   = {\subparagraph{#1}},
    headingSix    = {\textbf{#1}\par},
  }
}

\usepackage{cancel}
\usepackage{mathtools}

\usepackage{manfnt} %

\usepackage{figchild}
\usepackage{fontawesome5}

\usepackage[suppress]{color-edits}
\addauthor[Anay]{am}{gold}
\addauthor[Felix]{fz}{cyan}

\usepackage{wrapfig}

\usepackage{accents}
 
\usepackage[dvipsnames]{xcolor}
\usepackage{color} 
\definecolor{niceRed}{RGB}{190,38,38}
\definecolor{niceYellow}{HTML}{f5b400}
\definecolor{blueGrotto}{HTML}{059DC0}
\definecolor{royalBlue}{HTML}{057DCD}
\definecolor{navyBlue}{HTML}{0B579C}
\definecolor{yaleBlue}{HTML}{00356b}
\definecolor{limeGreen}{HTML}{81B622}
\definecolor{nicePurple}{HTML}{9c27b0}
\definecolor{lightRoyalBlue}{HTML}{def2ff}  
\definecolor{gold}{HTML}{ffa300}
\definecolor{claudeBlue}{HTML}{4C72B0}
\definecolor{claudeRed}{HTML}{C44E52}

\usepackage{hyperref}
\hypersetup{
  colorlinks = true, 
  urlcolor = {blueGrotto},
  linkcolor = {royalBlue},
  citecolor = {navyBlue}
}
\usepackage[capitalise,noabbrev,sort,nameinlink]{cleveref}

\usepackage[most]{tcolorbox}

\usepackage{xcolor}
\usepackage{xfp} %

\newcommand{\tokenstrut}{\rule[-0.12em]{0pt}{0.52em}}

\definecolor{TokenLow}{HTML}{F7FBFF}   %
\definecolor{TokenHigh}{HTML}{E36A6A}  %
\DeclareUnicodeCharacter{21A9}{\raisebox{0.15ex}{\scriptsize$\hookleftarrow$}}

\NewDocumentCommand{\token}{ O{0} +v }{%
  \begingroup%
  \setlength{\fboxsep}{0pt}%
  \ttfamily%
  \scriptsize%
  \colorbox%
    {TokenHigh!\fpeval{min(max(round(#1 * 65, 0), 0), 100)}!TokenLow}%
    {\tokenstrut #2}%
  \endgroup%
}%

\NewDocumentCommand{\cut}{ +v }{%
    \begingroup%
    \setlength{\fboxsep}{0pt}%
    \ttfamily%
    \scriptsize%
    \bfseries
    \colorbox%
    {LimeGreen}%
    {\tokenstrut #1}%
    \endgroup%
}%

\tcbset{
    mybox/.style={
        colframe=black,        %
        colback=gray!0,       %
        boxrule=0.5mm,         %
        width=\linewidth,      %
        arc=3mm,               %
        left=5mm,              %
        right=5mm,             %
        top=5mm,               %
        bottom=5mm,            %
        enhanced,              %
        halign=center        %
    }
}

\usepackage{booktabs} 
\usepackage{multicol} 
\usepackage{multirow} 
\usepackage{makecell} 
\usepackage{longtable}

\usepackage{graphicx}
\usepackage{float} 
\usepackage{tikz}
\usepackage{pgfplots}
\pgfplotsset{compat=1.17}
\usepgfplotslibrary{fillbetween}
\usetikzlibrary{arrows.meta,calc}
\usepackage{setspace}

\tikzset{
  myNodeFlex/.style={
    draw,
    rectangle,
    rounded corners,
    text centered,
    minimum height=1.5em,
  }
}

\tikzset{
  myNode/.style={
    draw,
    rectangle,
    rounded corners,
    text centered,
    minimum height=1.5em,
    minimum width=3cm,
    text width=5cm,    
  }
}

\tikzset{
  myNodeNarrow/.style={
    draw,
    rectangle,
    rounded corners,
    text centered,
    minimum height=1.5em,
    minimum width=1cm,
  }
}

\tikzset{
  myNodeWide/.style={
    draw,
    rectangle,
    rounded corners,
    text centered,
    minimum height=1.5em,
    minimum width=6cm,
  }
}

\usepackage{xinttools} %
\usetikzlibrary{calc}

\def\biglen{20cm} %
\tikzset{
  half plane/.style={ to path={
       ($(\tikztostart)!.5!(\tikztotarget)!#1!(\tikztotarget)!\biglen!90:(\tikztotarget)$)
    -- ($(\tikztostart)!.5!(\tikztotarget)!#1!(\tikztotarget)!\biglen!-90:(\tikztotarget)$)
    -- ([turn]0,2*\biglen) -- ([turn]0,2*\biglen) -- cycle}},
  half plane/.default={1pt}
}

\usepackage{physics} %
\usepackage{amsmath}
\usepackage{amssymb}
\usepackage{amsthm}
\usepackage{bbm}
\usepackage{blkarray}
\usepackage{mathtools}
\usepackage{nicefrac}
\usepackage{dsfont}
\usepackage{xspace}
\usepackage{pifont} %

\usepackage{thm-restate} %

\usepackage{enumerate} 
\usepackage[inline,shortlabels]{enumitem}
\usepackage{typed-checklist}
\usepackage{tcolorbox}
\usepackage[framemethod=TikZ]{mdframed}
\mdfsetup{%
backgroundcolor=white, %
roundcorner=4pt,
linewidth=1pt}
\usepackage{changepage} %

\usepackage[linesnumbered,ruled,vlined,noend,algo2e]{algorithm2e} 
\usepackage{mathrsfs} %
\usepackage{soul} %

\theoremstyle{plain} 
\newtheorem{theorem}{Theorem}[section]

\newtheorem{proposition}[theorem]{Proposition}
\newtheorem{lemma}[theorem]{Lemma}

\newtheorem{condition}{Condition}

\newtheorem{definition}{Definition}
\newtheorem*{definition*}{Definition}

\theoremstyle{definition} 
\newtheorem{example}[theorem]{Example}
\newtheorem{remark}[theorem]{Remark}

\theoremstyle{remark}

\AfterEndEnvironment{definition}{\noindent\ignorespaces}
\AfterEndEnvironment{example}{\noindent\ignorespaces}
\AfterEndEnvironment{assumption}{\noindent\ignorespaces}
\AfterEndEnvironment{condition}{\noindent\ignorespaces}
\AfterEndEnvironment{lemma}{\noindent\ignorespaces}
\AfterEndEnvironment{theorem}{\noindent\ignorespaces}
\AfterEndEnvironment{proposition}{\noindent\ignorespaces}
\AfterEndEnvironment{fact}{\noindent\ignorespaces}
\AfterEndEnvironment{question}{\noindent\ignorespaces}
\AfterEndEnvironment{corollary}{\noindent\ignorespaces}
\AfterEndEnvironment{model}{\noindent\ignorespaces}
\AfterEndEnvironment{remark}{\noindent\ignorespaces}
\AfterEndEnvironment{proof}{\noindent\ignorespaces}
\AfterEndEnvironment{fact}{\noindent\ignorespaces}
\AfterEndEnvironment{minftheorem}{\noindent\ignorespaces}
\AfterEndEnvironment{inftheorem}{\noindent\ignorespaces}
\AfterEndEnvironment{maintheorem}{\noindent\ignorespaces}
\AfterEndEnvironment{restatable}{\noindent\ignorespaces}
\AfterEndEnvironment{infassumption}{\noindent\ignorespaces}
\AfterEndEnvironment{conjecture}{\noindent\ignorespaces}
\AfterEndEnvironment{claim}{\noindent\ignorespaces}

\crefname{section}{Section}{Sections}
\crefname{theorem}{Theorem}{Theorems}
\crefname{lemma}{Lemma}{Lemmas}
\crefname{condition}{Condition}{Conditions}
\crefname{definition}{Definition}{Definitions}
\crefname{conjecture}{Conjecture}{Conjectures}
\crefname{corollary}{Corollary}{Corollaries}
\crefname{construction}{Construction}{Constructions}
\crefname{conjecture}{Conjecture}{Conjectures}
\crefname{claim}{Claim}{Claims}
\crefname{observation}{Observation}{Observations}
\crefname{proposition}{Proposition}{Propositions}
\crefname{fact}{Fact}{Facts}
\crefname{question}{Question}{Questions}
\crefname{problem}{Problem}{Problems}
\crefname{remark}{Remark}{Remarks}
\crefname{model}{Model}{Models}
\crefname{example}{Example}{Examples}
\crefname{equation}{Equation}{Equations}
\crefname{appendix}{Appendix}{Appendices}
\crefname{algorithm}{Algorithm}{Algorithms}
\crefname{algocf}{Algorithm}{Algorithms}
\crefname{model}{Model}{Models}
\crefname{figure}{Figure}{Figures}
\crefname{infdefinition}{Informal Definition}{Informal Definitions}
\crefname{inftheorem}{Informal Theorem}{Informal Theorems}
\crefname{infassumption}{Informal Assumption}{Informal Assumptions}
\crefname{minftheorem}{Main Informal Theorem}{Main Informal Theorems}
\crefname{maintheorem}{Main Theorem}{Main Theorems}
\crefname{assumption}{Assumption}{Assumptions}
\crefname{case}{Case}{Cases}
\crefname{program}{Program}{Programs}
\crefname{inequality}{Inequality}{Inequalities}

\newlist{asmpenum}{enumerate}{1} %
\setlist[asmpenum]{label={\arabic*.},ref=\theassumption.{\arabic*}}
\crefname{asmpenumi}{Assumption}{Assumptions}

\usepackage{etoolbox}

\newcommand{\yesnum}{\addtocounter{equation}{1}\tag{\theequation}}  

\makeatletter
\renewcommand{\eqref}[1]{\textup{\eqrefform@{\ref{#1}}}}
\let\eqrefform@\tagform@

\newcommand{\changetag}[1]{%
  \renewcommand\tagform@[1]{\maketag@@@{(\ignorespaces#1\unskip\@@italiccorr)}}%
}
\makeatother

\makeatletter
\newcommand{\tagnum}[2]{%
    \refstepcounter{equation}%
    \tag{#1) \ (\theequation}%
    \protected@write \@auxout {}{%
        \string \newlabel {#2}{{\theequation}{\thepage}{}{equation.\theequation}{}}%
    }%
}
\makeatother

\newcommand{\quadtext}[1]{\quad\text{#1}\quad}
\newcommand{\qquadtext}[1]{\qquad\text{#1}\qquad}

\newcommand{\qquadand}{\qquadtext{and}}

\newcommand{\quadwhere}{\quadtext{where}}

\def\abs#1{\left| #1 \right|}

\newcommand{\sinparen}[1]{\ensuremath{(#1)}}

\newcommand{\inbrace}[1]{\ensuremath{\left\{#1\right\}}}

\newcommand{\inparen}[1]{\ensuremath{\left(#1\right)}}

\newcommand{\N}{\mathbb{N}}
\newcommand{\R}{\mathbb{R}}

\newcommand{\tv}[2]{\operatorname{d}_{\mathsf{TV}}{\inparen{#1,#2}}}

\newcommand{\nfrac}[2]{\nicefrac{#1}{#2}}

\newcommand{\eps}{\varepsilon}
\renewcommand{\epsilon}{\varepsilon}
\makeatletter
\newcommand*{\tran}{{\mathpalette\@tran{}}}
\newcommand*{\@tran}[2]{\raisebox{\depth}{$\m@th#1\intercal$}}
\makeatother

\mathchardef\NABLA"272
\newcommand*{\Nabla}{\boldsymbol\NABLA}
\let\nabla\Nabla

\renewcommand{\tilde}{\widetilde}

\DeclareMathAlphabet{\mathdutchcal}{U}{dutchcal}{m}{n}
\SetMathAlphabet{\mathdutchcal}{bold}{U}{dutchcal}{b}{n}
\DeclareMathAlphabet{\mathdutchbcal}{U}{dutchcal}{b}{n}

\DeclareMathAlphabet\urwscr{U}{urwchancal}{b}{n}%
\DeclareMathAlphabet\rsfscr{U}{rsfso}{m}{n}
\DeclareMathAlphabet\euscr{U}{eus}{m}{n}
\DeclareFontEncoding{LS2}{}{}
\DeclareFontSubstitution{LS2}{stix}{m}{n}
\DeclareMathAlphabet\stixcal{LS2}{stixcal}{m} {n}

\renewcommand{\paragraph}[1]{\smallskip \noindent\textbf{#1}~}

\newcommand{\eg}{\emph{e.g.}}
\newcommand{\ie}{\emph{i.e.}}

\newcommand{\eat}[1]{}

\usepackage{caption}
\usepackage{subcaption}
\usepackage{graphicx}

\usepackage{array}
\newcolumntype{L}[1]{>{\raggedright\let\newline\\\arraybackslash\hspace{0pt}}m{#1}}
\newcolumntype{C}[1]{>{\centering\let\newline\\\arraybackslash\hspace{0pt}}m{#1}}
\newcolumntype{R}[1]{>{\raggedleft\let\newline\\\arraybackslash\hspace{0pt}}m{#1}}

\usepackage{fancyhdr}

\fancypagestyle{custom}{
  \fancyhf{}                %
  \fancyfoot[C]{\textcolor{black}{\fcFishD{0.025}{black}{0.5}}} %

}

\makeatletter
\newcommand{\appendixtitle}[1]{%
  {\par
  \vspace{0.2in}
  \@toptitlebar
  \centering
  {\LARGE\bf #1\par}
  \@bottomtitlebar
  \vspace{0.2in}}
}
\makeatother

\newcommand{\highlight}[1]{\emph{\textcolor{royalBlue}{#1}}}
\newcommand{\takeaway}[1]{\textbf{\textcolor{royalBlue}{#1}}}

\usepackage[utf8]{inputenc} %
\usepackage[T1]{fontenc}    %
\usepackage{hyperref}       %
\usepackage{url}            %
\usepackage{booktabs}       %
\usepackage{amsfonts}       %
\usepackage{nicefrac}       %
\usepackage{microtype}      %
\usepackage{xcolor}         %

\usepackage[
  backref=true,
  backend=biber,
  natbib=true,
  style=alphabetic,
  sorting=alphabeticlabel,
  sortcites=true,
  minbibnames=3,
  maxbibnames=999,
  mincitenames=4,
  maxcitenames=4,
  minalphanames=4,
  maxalphanames=4,
  doi=false,
  isbn=false,
]{biblatex}

\DeclareSortingTemplate{alphabeticlabel}{
  \sort[final]{%
    \field{labelalpha}
  }
  \sort{%
    \field{year}
  }
  \sort{%
    \field{title}
  }
}

\AtEveryBibitem{%
  \clearname{editor}%
}
\AtEveryBibitem{%
  \clearlist{location}%
}

\setlength{\bibitemsep}{0pt}
\setlength{\bibnamesep}{0pt}
\setlength{\bibinitsep}{0pt}

\AtBeginRefsection{\GenRefcontextData{sorting=ynt}}
\AtEveryCite{\localrefcontext[sorting=ynt]}
\addbibresource{refs.bib}

\newif\iffastcompile
\fastcompilefalse

\newtcolorbox{bluecompacttwopartbox}[4][]{%
  enhanced,
  breakable,
  colback=white,
  colframe=black!75,
  boxrule=0.6pt,
  arc=2pt,
  left=2pt, right=2pt, top=2pt, bottom=2pt,
  middle=2pt,
  title={#2},
  coltitle=black!80,
  colbacktitle=claudeBlue!10,
  fonttitle=\small\bfseries,
  toptitle=3pt, bottomtitle=3pt,
  lefttitle=10pt, righttitle=10pt,
  segmentation style={solid, black!75, line width=0.6pt},
  overlay={%
    \node[anchor=base east, inner xsep=6pt, inner ysep=0pt,
          font=\scriptsize\itshape, text=black!55]
      at ([yshift=3pt]segmentation.north east) {#3};
    \node[anchor=base east, inner xsep=6pt, inner ysep=0pt,
          font=\scriptsize\itshape, text=black!55]
      at ([yshift=3pt]interior.south east) {#4};
  },
  #1,
}
\newtcolorbox{redcompacttwopartbox}[4][]{%
  enhanced,
  breakable,
  colback=white,
  colframe=black!75,
  boxrule=0.6pt,
  arc=2pt,
  left=2pt, right=2pt, top=2pt, bottom=2pt,
  middle=2pt,
  title={#2},
  coltitle=black!80,
  colbacktitle=claudeRed!30,
  fonttitle=\small\bfseries,
  toptitle=3pt, bottomtitle=3pt,
  lefttitle=10pt, righttitle=10pt,
  segmentation style={solid, black!75, line width=0.6pt},
  overlay={%
    \node[anchor=base east, inner xsep=6pt, inner ysep=0pt,
          font=\scriptsize\itshape, text=black!55]
      at ([yshift=3pt]segmentation.north east) {#3};
    \node[anchor=base east, inner xsep=6pt, inner ysep=0pt,
          font=\scriptsize\itshape, text=black!55]
      at ([yshift=3pt]interior.south east) {#4};
  },
  #1,
}
\tcbset{
  tighttwopart/.style={
    unbreakable,
    boxsep=1pt,
    left=2pt,right=2pt,top=1pt,bottom=1pt,
    before skip=0pt,after skip=0pt,
    middle=1pt
  }
}

\usepackage{listings}
\lstset{breaklines=true,breakindent=0em,postbreak=\mbox{\textcolor{red}{$\hookrightarrow$}\space},basicstyle=\ttfamily}

\lstdefinestyle{tighttrace}{
  basicstyle=\scriptsize\ttfamily,
  escapechar=|,
  keepspaces=true,
  columns=fullflexible,
  aboveskip=0pt,
  belowskip=0pt,
  xleftmargin=0pt,
  xrightmargin=0pt,
  resetmargins=true
}

\renewcommand{\vec}[1]{#1}

\newtcolorbox{twopartbox}[4][]{%
  enhanced,
  breakable,
  colback=white,
  colframe=black!75,
  boxrule=0.6pt,
  arc=2pt,
  left=2pt, right=2pt, top=2pt, bottom=2pt,
  middle=2pt,
  title={#2},
  coltitle=black!80,
  colbacktitle=claudeBlue!10,
  fonttitle=\small\bfseries,
  toptitle=3pt, bottomtitle=3pt,
  lefttitle=10pt, righttitle=10pt,
  segmentation style={solid, black!75, line width=0.6pt},
  overlay={%
    \node[
      anchor=base east,
      inner xsep=6pt, inner ysep=2pt,
      font=\small\bfseries,
      text=black,
      fill=gold,
      rounded corners=2pt
    ]
    at ([yshift=6pt,xshift=-6pt]segmentation.north east) {#3};

    \node[
      anchor=base east,
      inner xsep=6pt, inner ysep=2pt,
      font=\small\bfseries,
      text=black,
      fill=gold,
      rounded corners=2pt
    ]
    at ([yshift=6pt,xshift=-6pt]interior.south east) {#4};
  },
    #1,
}

\lstdefinestyle{introtrace}{
  style=tighttrace,
  basicstyle=\ttfamily\scriptsize,
  aboveskip=0pt,
  belowskip=0pt,
  breaklines=false,
  breakatwhitespace=false,
  columns=fullflexible,
  escapechar=|
}

\lstdefinestyle{introtracesidebar}{
  style=tighttrace,
  basicstyle=\ttfamily\fontsize{6.6}{6.9}\selectfont,
  aboveskip=0pt,
  belowskip=0pt,
  breaklines=false,
  breakatwhitespace=false,
  columns=fullflexible,
  escapechar=|
}

\tcbset{
  introtracebox/.style={
    tighttwopart,
    before skip=0pt,
    after skip=0pt,
    boxsep=0.55mm,
    left=0.8mm,
    right=0.8mm,
    top=0.55mm,
    bottom=0.55mm,
    middle=0.35mm,
    fonttitle=\bfseries\small,
    fontupper=\scriptsize,
    fontlower=\scriptsize
  },
  introtraceboxtoplabels/.style={
    introtracebox,
    top=2.35mm,
    before lower={\vspace{1.65mm}}
  },
  introtraceboxsidebartoplabels/.style={
    tighttwopart,
    before skip=0pt,
    after skip=0pt,
    boxsep=0.38mm,
    left=0.55mm,
    right=0.55mm,
    top=2.0mm,
    bottom=0.4mm,
    middle=0.25mm,
    before lower={\vspace{1.2mm}},
    fonttitle=\bfseries\footnotesize,
    fontupper=\scriptsize,
    fontlower=\scriptsize
  },
  introtracelabel/.style={
    anchor=north east,
    font=\itshape\scriptsize,
    text=black!58,
    fill=white,
    fill opacity=0.86,
    text opacity=1,
    inner xsep=1pt,
    inner ysep=0pt
  },
  introtracesidebarlabel/.style={
    anchor=north east,
    font=\itshape\tiny,
    text=black!58,
    fill=white,
    fill opacity=0.86,
    text opacity=1,
    inner xsep=0.8pt,
    inner ysep=0pt
  }
}

\tikzset{
  introtracelabel/.style={
    anchor=north east,
    font=\itshape\scriptsize,
    text=black!58,
    fill=white,
    fill opacity=0.86,
    text opacity=1,
    inner xsep=1pt,
    inner ysep=0pt
  },
  introtracesidebarlabel/.style={
    anchor=north east,
    font=\itshape\tiny,
    text=black!58,
    fill=white,
    fill opacity=0.86,
    text opacity=1,
    inner xsep=0.8pt,
    inner ysep=0pt
  }
}

\NewDocumentEnvironment{blueintrotraceboxtoplabels}{O{} m m m}{%
  \begin{bluecompacttwopartbox}[introtraceboxtoplabels,overlay={%
      \node[introtracelabel]
        at ([xshift=-0.9mm,yshift=-0.45mm]interior.north east) {#3};
      \node[introtracelabel]
        at ([xshift=-0.9mm,yshift=-0.45mm]segmentation.east) {#4};
    },#1
  ]{#2}{}{}%
}{%
  \end{bluecompacttwopartbox}%
}

\NewDocumentEnvironment{redintrotraceboxtoplabels}{O{} m m m}{%
  \begin{redcompacttwopartbox}[introtraceboxtoplabels,overlay={%
      \node[introtracelabel]
        at ([xshift=-0.9mm,yshift=-0.45mm]interior.north east) {#3};
      \node[introtracelabel]
        at ([xshift=-0.9mm,yshift=-0.45mm]segmentation.east) {#4};
    },#1
  ]{#2}{}{}%
}{%
  \end{redcompacttwopartbox}%
}

\NewDocumentEnvironment{blueintrotraceboxsidebar}{O{} m m m}{%
  \begin{bluecompacttwopartbox}[introtraceboxsidebartoplabels,overlay={%
      \node[introtracesidebarlabel]
        at ([xshift=-0.7mm,yshift=-0.35mm]interior.north east) {#3};
      \node[introtracesidebarlabel]
        at ([xshift=-0.7mm,yshift=-0.35mm]segmentation.east) {#4};
    },#1
  ]{#2}{}{}%
}{%
  \end{bluecompacttwopartbox}%
}

\NewDocumentEnvironment{redintrotraceboxsidebar}{O{} m m m}{%
  \begin{redcompacttwopartbox}[introtraceboxsidebartoplabels,overlay={%
      \node[introtracesidebarlabel]
        at ([xshift=-0.7mm,yshift=-0.35mm]interior.north east) {#3};
      \node[introtracesidebarlabel]
        at ([xshift=-0.7mm,yshift=-0.35mm]segmentation.east) {#4};
    },#1
  ]{#2}{}{}%
}{%
  \end{redcompacttwopartbox}%
}

\title{Reasoning with Sampling: Cutting at Decision Points}

\author{
    \begin{tabular}{C{5cm}C{5cm}C{5cm}}
        {\bf Felix Zhou}
            & {\bf Anay Mehrotra}
            & {\bf Quanquan C.\ Liu}\\
        {Yale University}
            & {Stanford University}
            & {Yale University}
    \end{tabular}
}
\date{}

\makeatletter
\renewcommand{\appendixtitle}[1]{%
  {\par
  \vspace{0.2in}
  \hrule height 1pt
  \vspace{0.12in}
  \centering
  {\LARGE\bfseries #1\par}
  \vspace{0.12in}
  \hrule height 1pt
  \vspace{0.2in}}
}
\makeatother

\begin{document}

\maketitle 
\begin{abstract}
    Frontier reasoning models are produced by posttraining base language models with reinforcement learning. 
    Recent work has challenged this by showing that sampling from a sharpened version of the base model's distribution, a so-called \emph{power distribution}, elicits comparable reasoning without additional training, curated datasets, or verifiers.
    However, making this method practical requires \textit{efficiently sampling} from the power distribution.
    A sampler needs to ``mix'' to the power distribution, which necessitates moving between modes of the target distribution; intuitively, \eg{}, trying different reasoning strategies.
    The samplers proposed in prior works repeatedly select a ``cut'' position in the current reasoning trace uniformly at random and resample the suffix from that position onward.
    However, reasoning traces typically contain a few consequential decisions (\eg{}, the choice of proof strategy or algorithm), and we observe that a uniformly chosen cut tends to rewrite local details rather than revisit decision points.
    We introduce an algorithm (Entropy-Cut Metropolis--Hastings) that uses the base model's next-token entropy as a proxy to identify key decision points and resample from those positions.
    We empirically verify that entropy jumps are a useful proxy for decision points and, in a stylized model of reasoning, prove that our method's mixing time scales with the number of decisions in a trace rather than with the number of tokens, which can be much larger.
    Across MATH500, HumanEval, GPQA Diamond, and AIME26, our method \mbox{consistently improves over baselines and RL-trained models.}
\end{abstract}

\pagenumbering{arabic}

\section{Introduction}
\label{sec:introduction}

Frontier reasoning models are typically obtained by posttraining base language models with reinforcement learning.
This paradigm has produced striking gains on mathematics, coding, and scientific reasoning
\citep{deepseekai2025deepseekr1,comanici2025gemini,jaech2024openai}.
At the same time, it raises a basic question: does posttraining teach models fundamentally new reasoning procedures, or does it mainly elicit abilities already encoded in the base model?
One explanation for these gains is the \emph{sharpening} view.
Under this view, base models already assign non-negligible probability to many high-quality reasoning traces, and posttraining improves performance by concentrating more mass on these traces.
Indeed, one mechanism by which posttraining improves reasoning performance is
to convert strong top-$k$ performance into stronger top-$1$ performance
\citep{nath2025adaptive, yue2025does}.
A growing body of evidence supports this perspective: RL-trained models often improve pass@1 without expanding the set of reasoning paths available in the base distribution \citep{he2025understanding, yue2025does, shao2025spurious, song2025what}.

\vspace{0pt}

\begin{figure}[thb!]
\centering
\begin{minipage}[t]{0.5\linewidth}
\vspace{0pt}
\begin{tcbraster}[
  raster columns=1,
  raster row skip=1mm,
  raster before skip=0pt,
  raster after skip=0pt
]
\begin{blueintrotraceboxsidebar}{Uniform-Cut MH}{Current Completion}{Proposal Suffix}
\begin{lstlisting}[style=introtracesidebar]
we have $x = 0$ and $y = 3$.
### Step 1: Calculate $r$
$$ r=\sqrt{x^3+y^3}=\sqrt{0^3+3^3|\cut{[CUT]}|} $$
\end{lstlisting}
\tcblower
\begin{lstlisting}[style=introtracesidebar]
}=\sqrt{27}$$
\end{lstlisting}
\end{blueintrotraceboxsidebar}
\begin{redintrotraceboxsidebar}{Entropy-Cut MH (Ours)}{Current Completion}{Proposal Suffix}
\begin{lstlisting}[style=introtracesidebar]
we have $x = 0$ and $y = 3$.
|\cut{[CUT]}|### Step 1: Calculate $r$
$$ r=\sqrt{x^3+y^3}=\sqrt{0^3+3^3}=\sqrt{27} $$
\end{lstlisting}
\tcblower
\begin{lstlisting}[style=introtracesidebar]
### Step 1: Calculate $r$
$$ r=\sqrt{x^2+y^2}=\sqrt{0^2+3^2}=3 $$
\end{lstlisting}
\end{redintrotraceboxsidebar}
\end{tcbraster}
\end{minipage}\hfill
\begin{minipage}[t]{0.5\linewidth}
\vspace{0pt}
\includegraphics[width=\linewidth]{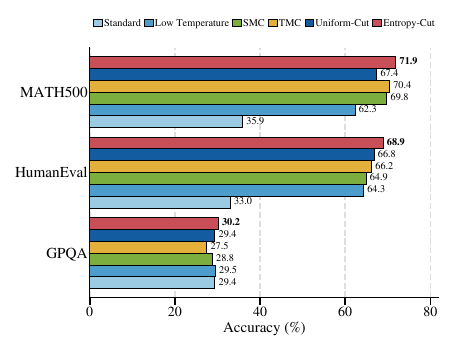}
\end{minipage}

\vspace{0pt}
\caption{
\textbf{Entropy-Cut MH revises reasoning traces at decision points and improves accuracy.}
Left: uniform cuts can splice proposals inside a local calculation, producing suffixes that only rewrite nearby tokens.
Entropy-Cut instead cuts near high-uncertainty reasoning steps, allowing the proposal to reconsider the underlying continuation.
Right: on Qwen2.5-7B, this targeted proposal improves accuracy over standard sampling, low-temperature sampling, SMC, TMC, and uniform-cut MH across MATH500, HumanEval, and GPQA Diamond.
}
\vspace{0pt}
\label{fig:intro_entropy_cut_side_by_side}
\end{figure}

Building on this, \citet{karan2026reasoning} showed that strong reasoning behavior can be elicited from a base model without additional training by sampling from a sharpened version of the base distribution.
Concretely, for a base distribution $p$ over complete traces, they target the power distribution
\[
    \Pi_T(\vec{x}) \propto p(\vec{x})^\alpha\,, 
\]
which upweights traces already assigned high likelihood by the base model.
This approach is attractive because it is training-free, dataset-free, and verifier-free, making it a form of test-time scaling that uses only the base model's own probabilities.
Sampling from $\Pi_T$ is, however, computationally non-trivial (see \cref{sec:preliminaries}).
Karan and Du address this with a stagewise Metropolis--Hastings sampler that iteratively revises a candidate trace: it picks a cut position, keeps the prefix before the cut, resamples the suffix from a proposal model, and accepts or rejects the new trace via the Metropolis--Hastings correction.
This sampler succeeds in eliciting strong reasoning, but its efficiency is the bottleneck for the entire approach, raising the key question: %
\begin{center}
    \vspace{0pt}
    \textit{How can we sample from the power distribution efficiently?}
    \vspace{0pt}
\end{center}

\paragraph{Prior Work.}
Several recent works have sought to address this question by departing from the stagewise Metropolis--Hastings framework.
\citet{azizi2026power} replace it with a Sequential Monte Carlo method that maintains particles and corrects their weights token by token, while \citet{ji2026scalable} sample autoregressively by approximating the next-token conditionals of $\Pi_T$.
Hence, both of these works depart from the approach of prior work and use new and more sophisticated sampling algorithms.
We instead retain the Metropolis--Hastings sampler of \citet{karan2026reasoning} and modify it to better suit the structure of reasoning traces. %

\paragraph{Our Approach.}
For an MCMC method to mix efficiently, it must be able to transition between the modes of its target distribution. For $\Pi_T$, which is a distribution over sequences of tokens, these modes correspond, heuristically, to qualitatively different lines of reasoning: a proof by induction and a proof by contradiction may both have high probability under the sharpened distribution, but moving between them requires revisiting the early decision that chose the proof strategy (\cref{fig:conductance-reasoning}).

\paragraph{Our Contributions.}
Our contributions are as follows.
\begin{itemize} 
    \item We introduce \emph{Entropy-Cut Metropolis--Hastings} (\cref{sec:entropy-cut}), a training-free sampler for $\Pi_T$ that modifies the stagewise sampler of \citet{karan2026reasoning} by placing cuts at positions of high next-token entropy, a proxy for decision points (see \cref{fig:intro_entropy_cut_side_by_side}), rather than uniformly. The Metropolis--Hastings correction ensures the target distribution is unchanged.
    \item We formalize the benefit of cutting at decision points (\cref{sec:theoretical-results}) in a \emph{reasoning-tree} model, where the possible reasoning traces are encoded as a tree whose root-to-leaf paths are token sequences and whose branch nodes correspond to key decision points. We show that entropy-cut mixing scales with the number of decision points $k$, while uniform-cut mixing can scale with the token depth $T$, which can be much larger than $k$ (\cref{thm:approx-symmetric-mixing-separation}).
    \item We empirically demonstrate the effectiveness of our algorithm over a range of models (\texttt{\small Qwen2.5-7B}, \texttt{\small Qwen2.5-Math-7B}, \texttt{\small Qwen3-8B-Base}, \texttt{\small Phi-3.5-mini-instruct}, \texttt{\small Phi-4-mini-instruct}) and reasoning tasks (MATH500, HumanEval, GPQA Diamond, AIME26).
    As summarized in \cref{fig:intro_entropy_cut_side_by_side}, our results show that more suitable power sampling techniques can \emph{better} extract the latent reasoning capabilities of language models.
    Moreover, over multiple samples, we avoid a collapse in diversity despite the increased single-shot performance.
\end{itemize}

    \paragraph{Organization.}
    The next sections review the power-distribution sampling framework, introduce Entropy-Cut MH, and analyze it through a reasoning-tree model.
    We discuss the closest related methods in context and defer broader related work to \cref{sec:appendix:related-work}.

\section{Preliminaries on Sharpening and Metropolis--Hastings Algorithm}
\label{sec:preliminaries}
In this section, we set up notation, introduce the \emph{power distribution}, and describe the stagewise Metropolis--Hastings sampler of \citet{karan2026reasoning} that serves as our starting point.

\paragraph{Notation.}
For nonnegative functions $f,g :\N\to \R_{\geq 0}$, we write $f \lesssim g$ if there exists an absolute constant $C < \infty$ such that $f(x) \leq C g(x)$ for all $x \in \N$. 
Similarly, we write $f \gtrsim g$ if $g\lesssim f$ and $f \simeq g$ if $f \lesssim g$ and $f \gtrsim g$.
We work with finite sequences of tokens and denote a sequence of length $T+1$ by $x_{0:T} \coloneqq (x_0, x_1, \dots, x_T)$. For any $0 \leq t \leq T$, we write $x_{<t} \coloneqq (x_0, \dots, x_{t-1})$ and $x_{>t} \coloneqq (x_{t+1}, \dots, x_T)$ for the prefix and suffix around position~$t$. 
Let $x_P$ be a prompt and let $p$ be an autoregressive language model which, given $x_P$, generates a continuation $x_{0:T}$ with probability
$p(x_{0:T}\mid x_P)
    =
    \prod_{t=0}^{T} p(x_t\mid x_P,x_{<t})$.
Throughout the paper, we condition on the prompt $x_P$ and suppress it from the notation, writing
\[
    \textstyle 
    p(\vec{x})
    \coloneqq
    \prod\nolimits_{t=0}^{\ell} p(x_t\mid x_{<t})
\]
for any prefix $\vec{x}=x_{0:\ell}$.
Sampling from $p$ is straightforward: drawing $x_t \sim p(\cdot \mid x_{<t})$ sequentially for $t = 0, 1, \dots, \ell$ produces a sample from $p(\vec{x})$ in $\ell+1$ forward passes. 
As we discuss, \highlight{this convenience does \emph{not} extend to our target distribution, and sampling efficiently is a central challenge of our paper}.

\paragraph{Power Distribution and Sharpening.}
A growing body of evidence suggests that RL-posttraining does not teach base models fundamentally new reasoning capabilities, but rather concentrates their sampling on high-likelihood reasoning traces that are already present in the base distribution \citep{he2025understanding, yue2025does, shao2025spurious, song2025what}. 
In other words, among candidate continuations of a fixed prompt, those assigned higher probability by the base model tend to be more likely to be correct. %
This motivates \emph{sharpening}: reweighting sequences by a monotone function of $p(\vec{x})$ that enlarges the ratio between likely and unlikely sequences, so that the resulting distribution concentrates on the sequences the base model is already most confident about.
Building on this insight, \citet{karan2026reasoning} proposed sampling from the \emph{power distribution} as a concrete, training-free mechanism for sharpening. Fix a parameter $\alpha>1$, called the \emph{sharpening power}. For each sequence length $\ell$, the power distribution $\Pi_\ell$ is obtained by raising $p(\vec{x})$ to the $\alpha$-th power and renormalizing:
\[
    \Pi_\ell(\vec{x})
    \coloneqq
    \frac{p(\vec{x})^\alpha}{Z_{\ell,\alpha}}
    \quadwhere 
    Z_{\ell,\alpha}
    \coloneqq
    \sum\nolimits_{y_{0:\ell}} p(y_{0:\ell})^\alpha\,.
    \tag{Target: $\alpha$-Power Distribution}
\]
The parameter $\alpha$ controls the strength of sharpening: the probability ratio between two sequences is amplified from $p(\vec{x})/p(\vec{y})$ under $p$ to $\inparen{p(\vec{x})/p(\vec{y})}^{\alpha}$ under $\Pi_\ell$. As $\alpha\to 1$, the power distribution reduces to $p$, and as $\alpha\to\infty$, it concentrates on the most likely completion. The distribution that we ultimately wish to sample from is $\Pi_T$, where $T$ is the total generation length.
Intuitively, sampling from $\Pi_T$ favors tokens whose continuations concentrate on a few high-likelihood futures, rather than tokens that spread probability over many mediocre ones. This implicit bias toward ``planning ahead'' is particularly valuable for reasoning, where a few early token choices can determine whether the rest of the trace succeeds or fails \citep{li2025critical, abdin2024phi}.
\citet{karan2026reasoning} show in their simulations that being able to sample from $\Pi_T$ can elicit comparable reasoning capabilities without additional training.

Next, we explain why sampling from $\Pi_T$ is hard and how prior work overcomes this.

\paragraph{Sampling from the Power Distribution Is Hard.}
At first sight, it might seem that sampling from $\Pi_T$ reduces to low-temperature sampling at temperature $1/\alpha$, where each next-token conditional $p(\cdot\mid x_{<t})$ is independently raised to the $\alpha$-th power and renormalized. This is not the case: low-temperature sampling sharpens and normalizes each next-token distribution locally, whereas the power distribution $\Pi_T$ sharpens complete traces and normalizes globally through $Z_{T,\alpha}$. In fact, the two distributions can be made arbitrarily far apart in total variation, even for sequences of just two tokens (see \cref{sec:appendix:low-temp-fails}).
More fundamentally, computing $Z_{T,\alpha}$ requires summing over an exponentially large space of token sequences, making exact sampling from $\Pi_T$ intractable.

\paragraph{Metropolis--Hastings for Sampling from the Power Distribution.}
Crucially, although the normalizing constant $Z_{T,\alpha}$ is intractable, the \textit{unnormalized} density $p(\vec{x})^\alpha$ is easy to evaluate: a single autoregressive forward pass yields $p(\vec{x})$. 
For sampling from targets specified only through an unnormalized density, a central tool is the Metropolis--Hastings (MH) algorithm \citep{metropolis1953equation,hastings1970monte}. 
Given a proposal distribution $q(\vec{x}|\vec{x}')$, a single MH step at state $\vec{x}$ draws a candidate $\vec{x}' \sim q(\vec{x},\cdot)$ and accepts it with probability
$\min\inbrace{1,~ \nfrac{\pi(\vec{x}')\,q(\vec{x}|\vec{x}')}{(\pi(\vec{x})\,q(\vec{x'}|\vec{x}))}},$
otherwise the chain stays at $\vec{x}$. 
Iterating this rule produces a Markov chain whose stationary distribution is $\pi$ (under minimal assumptions of aperiodicity and irreducibility, which hold in our setting); see \cref{sec:appendix:mh-assumptions-hold}.

The key property of MH is that the acceptance rule depends on $\pi$ only through ratios of densities, so the intractable normalizer $Z_{T,\alpha}$ cancels in $\Pi_T(\vec{x}')/\Pi_T(\vec{x})$, leaving a ratio involving only $p(\vec{x}')^\alpha$ and $p(\vec{x})^\alpha$ that is computable from autoregressive forward passes (see \cref{rem:mh-tractability}).
However, \highlight{the efficiency of MH depends critically on the proposal distribution $q$: a poor choice can cause the chain to mix exponentially slowly, \ie{}, require an exponentially large number of iterations before its distribution converges.} This problem is especially acute for $\Pi_T$, whose state space $\smash{\abs{V}^T}$ grows exponentially in the generation length \citep{gheissari2017mixing,bandeira2022franz}.

To overcome this, \citet{karan2026reasoning} propose a stagewise MH sampler that (partially) addresses this challenge through two design choices (see \cref{alg:karan-du}). First, instead of targeting $\Pi_T$ directly, they grow the generation in blocks of size $B$: samples from $\Pi_{kB}$ are used to initialize an MH chain targeting $\Pi_{(k+1)B}$, avoiding pathological cold starts. Second, within each stage, the proposal distribution is defined by a \emph{cut distribution} $\lambda(\cdot;\vec{x})$, a distribution over positions $\inbrace{1,\ldots,\ell}$: the chain draws a cut index $m \sim \lambda(\cdot;\vec{x})$, retains the prefix $x_{0:m-1}$, and redraws the suffix $x_{m:\ell}$ from a proposal model $p_{\mathrm{prop}}$. Concretely, the proposal distribution assigns zero mass to any candidate $\vec{x}'$ whose prefix disagrees with $\vec{x}$ before position $m$, and autoregressive probability $\prod_{t=m,\dots,\ell} p_{\mathrm{prop}}(x'_t \mid x'_{<t})$ to any candidate that agrees on the prefix (see \cref{rem:cut-kernel} for the explicit form). The resulting acceptance probability, after the standard MH correction for the state-dependent cut distribution, is
\[
    A_m(\vec{x},\vec{x}')
    =
    \min\inbrace{
        1,\,
        \frac{p(\vec{x}')^\alpha}{p(\vec{x})^\alpha}
\cdot
\frac{\lambda\inparen{m;\vec{x}'}}{\lambda\inparen{m;\vec{x}}}
\cdot
\frac{p_{\mathrm{prop}}\inparen{x_{m:\ell}\mid x_{0:m-1}}}
     {p_{\mathrm{prop}}\inparen{x'_{m:\ell}\mid x_{0:m-1}}}
    }\,.
    \yesnum\label{eq:acceptance-probability}
\]
\citet{karan2026reasoning} select a \emph{uniform cut distribution}, $\lambda_{\mathrm{unif}}(m;\vec{x}) \equiv 1/\ell$, paired with a proposal $p_{\mathrm{prop}}$ given by the low-temperature version of $p$, which treats every token position as an equally useful place to revise the trace. 
In \cref{alg:meta}, we present pseudocode for a meta-algorithm that \textit{generalizes \citet{karan2026reasoning}'s method} and allows for a general cut distribution.
As we argue in the next section, \highlight{the uniform cut distribution from \cite{karan2026reasoning} is suboptimal and a key contribution of our work is a carefully designed cut distribution} that improves empirical performance by reducing mixing time.

\newcommand{\tcpCustom}[1]{\tcp{\textnormal{\emph{\textcolor{gray}{#1}}}}}
\newcommand{\inlinetcpCustom}[1]{//~{\textnormal{\emph{\textcolor{gray}{#1}}}}}
\begin{figure}[t]
\begin{minipage}{\linewidth}
\small 
\begin{algorithm2e}[H]
\DontPrintSemicolon
\LinesNumbered
\caption{Meta-Algorithm: Stagewise Metropolis--Hastings}
\label{alg:meta}

\SetKwInOut{Input}{Input}
\SetKwInOut{Hyper}{Hyperparameters} 
\SetKwInOut{Output}{Output}

\Input{\mbox{cut-point sampler $\lambda$; proposal models $p_{\mathrm{prop}}$; power $\alpha > 1$; length $T$}}
\Hyper{block size $B$; number of MCMC steps per stage $N_{\mathrm{MCMC}}$}
\Output{continuation $x_{0:T}$ approximately distributed according to $\Pi_T$}

\BlankLine
Set $K \gets \lceil T/B \rceil$, $T_0\gets 0$, and $x^{(0)}$ to the empty continuation\;

\vspace{2mm}

\For{each stage $k \gets 1,2,\dots,K$}{
    Set the $k$-th stage's target length $T_k \gets \min\!\inbrace{kB,\,T}$ \;

    Sample a suffix $s \sim p_{\mathrm{prop}}(\cdot \mid x^{(k-1)})$ of length $T_k - T_{k-1}$\;
    Set the current state $x \gets \sinparen{x^{(k-1)}, s}$\;

    \vspace{4mm}
    
    \tcpCustom{Metropolis--Hastings chain targeting the power distribution for length-$T_k$}
    \For{each MCMC step $n \gets 1,2,\dots,N_{\mathrm{MCMC}}$}{
        Sample a cut point $m \sim \lambda(\cdot\,;x)$ (from which the current continuation will be revised) \;

        Sample a (new) suffix $s \sim p_{\mathrm{prop}}(\cdot \mid x_{0:m-1})$ of length $T_k - m$\;
        Set the proposed state as $x^{\mathrm{prop}} \gets \inparen{x_{0:m-1}, s}$\;

        Compute the acceptance probability $A_m\!\inparen{x, x^{\mathrm{prop}}}$ from \cref{eq:acceptance-probability}
 
        \mbox{With probability $A_m\!\inparen{x, x^{\mathrm{prop}}}$, \textit{accept} the proposal and set $x\gets x^{\mathrm{prop}}$\phantom{..............................}}
        Otherwise, \textit{reject} the proposal and leave $x$ unchanged\; 
    }
    \vspace{2mm}
 
    Set the $k$-th stage's output as $x^{(k)} \gets x$\;
}

\Return{$x^{(K)}$}\;
\end{algorithm2e}
\vspace{0pt}
\end{minipage}
\vspace{0pt}
\end{figure}
 
\section{Entropy-Cut Metropolis--Hastings}
\label{sec:entropy-cut}

In this section, we introduce our algorithm for sampling from the power distribution $\Pi_T$. 

Our algorithm is an instantiation of the meta-algorithm from \cref{alg:meta}.
It is identical to the algorithm of \citet{karan2026reasoning} except in the choice of the
\emph{cut distribution} $\lambda$---the rule that chooses where each MH proposal is cut. 
Rather than cutting uniformly at random, our cut distribution concentrates cuts at positions where the model faces a genuine choice.
The motivation for this change is that the uniform-cut sampler can mix slowly. 

A standard way to think about slow mixing is through \emph{conductance}. 
Roughly speaking, conductance measures how easily a Markov chain can leave one region of its state space and move to another: if the chain tends to get stuck in a region for many steps, then the conductance is low, and mixing is slow.

In our setting, these regions correspond to different high-level lines of reasoning. 
For example, a proof by induction and a proof by contradiction may lie in different regions, and moving from one to the other requires the sampler to revisit the early decision that chose the proof strategy. 
More generally, reasoning traces contain only a few such high-level choices---which proof technique to use, how to split into cases, which algorithm to implement---followed by many tokens that simply work out the consequences of those choices. 
A sampler that treats every token position as equally worth revising will usually cut too late, so it mostly rewrites details within the same line of reasoning rather than moving to a different one. The example below makes this concrete in the setting of a math proof.

\begin{example}[Proof Strategy as an Early Choice]
\label{ex:proof-strategy}
Consider proving a generalization bound. Early in the trace, the model may choose between a uniform-convergence argument and an algorithmic-stability argument. Once this choice is made, later tokens mostly execute the chosen proof strategy (see \cref{fig:conductance-reasoning}). If the choice occurs at position $t^\star$ in a trace of length $\ell$, a uniform cut revisits it with probability only $t^\star/\ell$; when $t^\star \ll \ell$, the sampler rarely tries a different approach, creating a low-conductance bottleneck that increases mixing time.
\end{example} 
A natural way to address this is therefore to revisit such consequential decisions more often.
To turn this idea into an algorithm, however, we need a generic way to identify where these decisions occur.
In some special cases, one could do this using ad hoc rules---for example, in a proof one might ask the model to write in explicit steps and then cut at the beginning of each step.
But such rules do not generalize across the wide variety of reasoning tasks handled by modern LLMs.

 Our first \takeaway{insight is to use the entropy of the base model's next-token distribution as a proxy for a decision point.}
 Intuitively, entropy is high when several continuations are plausible, and low when the continuation is largely determined. For each $0 \leq t \leq \ell$, we define
$h_t(\vec{x})
    \coloneqq
    -\sum_{v \in V} p(v \mid x_{<t})\, \log p(v \mid x_{<t}),$
the Shannon entropy of the base model's next-token distribution after $x_{<t}$. %
 
Using $h_t$ directly has one drawback: after an important decision point, entropy often remains elevated for several subsequent tokens while the consequences of that decision are being worked out. A cut distribution proportional to $h_t$ would therefore spread mass across an entire high-entropy stretch, even though the most informative place to cut is often where the uncertainty first increases. Motivated by this, we use the \emph{positive entropy jump}
\[
    \Delta_t(\vec{x})
    \coloneqq
    \max\inbrace{0,~ h_t(\vec{x}) - h_{t-1}(\vec{x})}
    \qquad (1 \leq t \leq \ell)\,,
    \yesnum\label{def:entropy-jump}
\]
which emphasizes positions where the model's uncertainty has just increased relative to the preceding token, while suppressing later tokens in the same high-entropy region.
This leads to the following cut law.
\begin{definition}[Entropy-Cut Distribution]
\label{def:entropy-cut-law}
    Fix a \emph{cut power} $\beta \geq 0$. The \emph{entropy-cut distribution} %
    \[
        \text{for a state $x$ of length $\ell$\,,}\quad 
        \lambda_\beta(m;\vec{x})
        ~\propto~
        \Delta_m(\vec{x})^\beta
        \qquad (1 \leq m \leq \ell)\,.
    \]
    In the edge case where $\Delta_t(\vec{x}) = 0$ for every $t$, the distribution is uniform over $\inbrace{1,\dots,\ell}$.
\end{definition}
Here, the choice $\beta = 0$ recovers the uniform cut distribution $\lambda_{\mathrm{unif}}$ of \citet{karan2026reasoning}, while larger $\beta$ places more mass on larger entropy jumps.

Entropy has recently been used to identify branching points in chain-of-thought reasoning and to allocate additional computation at such points \citep{wang2025beyond,li2026entropy,zhang2025entropy}. 
Our use of entropy is somewhat different: rather than using it to guide search, training, or evaluation of the output, we use entropy jumps to define the \emph{proposal distribution} of a Metropolis--Hastings sampler targeting the power distribution. 
In particular, entropy affects the MH sampling process. %

\paragraph{Entropy-Cut MH and Its Running Time.}
Our algorithm, \emph{Entropy-Cut MH}, is obtained by instantiating the meta-algorithm of \cref{alg:meta} with the cut distribution $\lambda_\beta$; see \cref{alg:samp} for the full pseudocode. Relative to the uniform-cut sampler of \citet{karan2026reasoning}, the only change is the choice of cut distribution: the stagewise targets, the proposal model $p_{\mathrm{prop}}$, and the Metropolis--Hastings acceptance rule are unchanged. Moreover, the entropies $h_t(\vec{x})$ are already produced by the forward passes used to evaluate $p(\vec{x})^\alpha$, so \highlight{computing $\lambda_\beta$ adds no cost over the uniform-cut baseline}.

 \begin{figure}[t]
\centering
\begin{minipage}[t]{0.44\linewidth}
\vspace{0pt}
\centering
\vspace{0pt}
\begin{tikzpicture}
    \node[anchor=south west, inner sep=0] (img) at (0,0)
    {\includegraphics[width=0.8\linewidth]{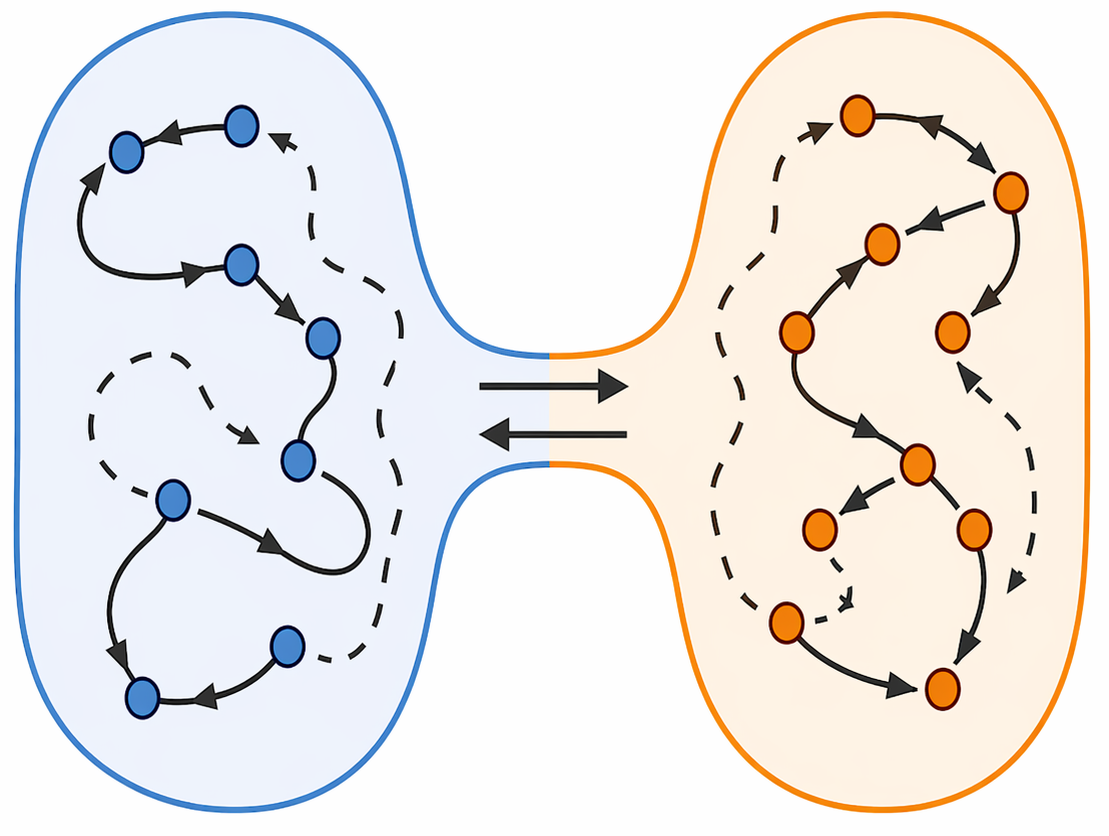}};

    \begin{scope}[x={(img.south east)}, y={(img.north west)}]
        \node[align=center] at (0.2,-0.20)
        {Approach 1\\ (\eg{}, proof by\\ induction)};

        \node[align=center] at (0.8,-0.20)
        {Approach 2\\ (\eg{}, proof by\\ contradiction)};

        \node[align=center] at (0.50,0.25)
        {\textit{early}\\\textit{decision}};

        \draw[densely dashed, -{Latex[length=2mm]}, thick]
            (0.50,0.33) -- (0.50,0.43);
    \end{scope}
\end{tikzpicture}
\end{minipage}
\hfill
\begin{minipage}[t]{0.45\linewidth}
\caption{
Illustration of a low-conductance bottleneck. %
Each region represents a family of reasoning traces that share the same high-level approach (\eg{}, two different proof strategies).
Within a region, the sampler can make many local edits. %
To move from one region to the other, however, the sampler must revisit an \emph{early decision} that determined the approach. %
When this happens rarely, it creates a low-conductance bottleneck leading to slow mixing.
}
\label{fig:conductance-reasoning}
\end{minipage}
\vspace{0pt}
\end{figure}
\subsection{Empirical Validation of the Entropy Proxy}
\label{subsec:entropy-proxy-empirical}

The entropy-cut rule is useful only if large positive entropy jumps tend to occur near genuine decision points in the model's reasoning. We test this directly by comparing what happens when we resample from positions with large $\Delta_t$ to what happens when we resample from positions with very small $\Delta_t$.

\paragraph{Experiment.}
We conduct a simple experiment to test how sensitive an LLM's continuation is to resampling at different points in a chain-of-thought. For each MATH500 problem, we generate a single completion and record the token-level entropies along the way. We then compute the entropy jumps $\Delta_t$, select 5 cut positions from the top decile of the $\Delta_t$ values and 5 from the bottom decile, and resample the suffix 16 times from each selected position while keeping the prefix fixed. We quantify the resulting variation in two ways: \emph{normalized edit distance} between the resampled suffixes, computed as the average pairwise token-level Levenshtein distance divided by the mean suffix length, and \emph{distinct-answer fraction}, computed as the number of unique parsed final answers among the resamples divided by the number of resamples. Together, these give a per-position measure of suffix divergence.

\paragraph{Findings and Interpretation.}
We find that cutting at top-decile $\Delta_t$ positions produces substantially more divergent continuations than cutting at bottom-decile positions. The normalized edit distance is about $1.33\times$ higher (0.287 vs.\ 0.216), and the distinct-answer fraction is about $1.36\times$ higher (0.63 vs.\ 0.47). 
This is direct evidence that large entropy jumps tend to mark branching points in the reasoning trace: resampling at such points leads to more varied downstream text and more varied conclusions, whereas resampling at low-$\Delta_t$ positions typically yields much more consistent suffixes.

\section{Theoretical Results on Mixing Time}
\label{sec:reasoning-tree-model}
\label{sec:theoretical-results}

In this section, we study the mixing time of our method under the following stylized model: %
\begin{definition}[Reasoning Tree]
\label{def:reasoning-tree}
    Fix a prompt and a target length $T$. A \emph{reasoning tree} for $(\Pi_T, p_{\mathrm{prop}})$ is a rooted, token-labeled tree of depth $T$ whose root-to-leaf paths are token sequences $x_{0:T}$, such that $\Pi_T$ and $p_{\mathrm{prop}}$ are both supported on these paths. For a root-to-leaf path $\vec{x} = x_{0:T}$, the node at depth $t \in \inbrace{1, \dots, T}$ is a \emph{branch node} on the path if the prefix $x_{0:t-1}$ has at least two children in the tree, and a \emph{chain node} otherwise.
\end{definition}
Intuitively, a \textit{chain node} represents positions where the next token is effectively determined---for instance, the next step of a calculation already in progress, or completing the implementation of an already-described function. 
A \textit{branch node}, in contrast, represents a position where a decision must be made, \eg{}, which calculation to do or which function to implement. 
For instance, in \cref{ex:proof-strategy} the reasoning tree has a branch at depth $b_1 = O(1)$ where the path splits by proof technique; subsequent branches encode decisions internal to the chosen technique.
 
The reasoning tree constrains the support of the power distribution $\Pi_T$ but does not place any constraint on how it distributes its mass across these paths. 
How this mass is distributed can substantially affect the time a method takes to converge to $\Pi_T$.
Indeed, if $\Pi_T$ places nearly all of its mass on a single path $p$, then any instantiation of Meta-Algorithm~\ref{alg:meta} lands on $p$ quickly and the choice of the cut distribution is irrelevant.
The challenging regime is where $\Pi_T$ places comparable mass on several paths and the sampling algorithm must ``explore'' all of them.
To isolate the effect of the cut distribution, we analyze our method in this regime, formalized in the condition below.

\begin{condition}[Approximately Symmetric Reasoning Tree]
\label{cond:approx-symmetric-reasoning-tree}
    Let $U$ be the uniform distribution over root-to-leaf paths.
    For some $\eta \geq 0$, the reasoning tree satisfies the following symmetry conditions:
    \begin{enumerate}%
        \item For each root-to-leaf path $\vec{x}$, $e^{-\eta} U(\vec{x}) \leq \Pi_T(\vec{x}) \leq e^{\eta} U(\vec{x})$. %
        \item For each root-to-leaf path $\vec{x}$ and $1<t<T$, the proposal distribution satisfies $\frac{U(\vec{x}_{>t}|x_{0:t})}{p_{\mathrm{prop}}(x_{>t}|x_{0:t})} \leq e^\eta$.
        \item For each root-to-leaf path $\vec{x}$,
            the entropy jump $\Delta_t(\vec{x})$ is $\Delta>0$ at branch nodes and $0$ otherwise.
        \item For each branch node in the tree, the subtrees rooted at its children are isomorphic.
    \end{enumerate}
\end{condition}
\begin{figure}[hbt!]
    \centering
    \begin{tikzpicture}[
        x=1cm,
        y=1cm,
        transition/.style={
            -{Latex[length=2mm,width=1.5mm]},
            line width=0.55pt,
            draw=black!80
        },
        entropy/.style={
            -{Latex[length=2.4mm,width=1.8mm]},
            line width=0.8pt,
            draw=red!80!black
        },
        every node/.style={font=\sffamily\scriptsize},
        reasoning node/.style={
            draw=black!70,
            rounded corners=2.5pt,
            minimum width=2.25cm,
            minimum height=0.58cm,
            inner xsep=2pt,
            inner ysep=2pt,
            align=center
        },
        chain node/.style={
            reasoning node,
            fill=black!7
        },
        branch node/.style={
            reasoning node,
            fill=blue!22
        },
        entropy label/.style={
            font=\sffamily\scriptsize,
            text=red!85!black,
            align=left
        },
        legend text/.style={
            font=\sffamily\scriptsize,
            anchor=west
        }
    ]
    
    \node[chain node]  (proof) at (0, -0.78) {proof};
    \node[chain node]  (is)    at (0, -1.68) {is};
    \node[branch node] (via)   at (0, -2.58) {via};
    
    \node[branch node] (ind)    at (-2.5, -3.48) {induction};
    \node[branch node] (contra) at ( 0.00, -3.48) {contradiction};
    \node[branch node] (cons)   at ( 2.5, -3.48) {constructive};
    
    \node[chain node] (i1) at (-2.5, -4.38) {base case};
    \node[chain node] (i2) at (-2.5, -5.28) {assume $k$};
    
    \node[chain node] (c1) at (0.00, -4.38) {suppose not};
    \node[chain node] (c2) at (0.00, -5.28) {derive conflict};
    
    \node[chain node] (s1) at (2.5, -4.38) {define object};
    \node[chain node] (s2) at (2.5, -5.28) {verify};
    
    \draw[transition] (proof) -- (is);
    \draw[transition] (is) -- (via);
    
    \draw[transition] (via) -- (ind);
    \draw[transition] (via) -- (contra);
    \draw[transition] (via) -- (cons);
    
    \draw[transition] (ind) -- (i1);
    \draw[transition] (i1) -- (i2);
    
    \draw[transition] (contra) -- (c1);
    \draw[transition] (c1) -- (c2);
    
    \draw[transition] (cons) -- (s1);
    \draw[transition] (s1) -- (s2);
    
    \draw[entropy]
        ($(via.east)+(0.55,-0.20)$)
        --
        ($(via.east)+(0.55,0.48)$);
    
    \node[entropy label, anchor=west]
        at ($(via.east)+(0.72,0.08)$)
        {Entropy jump (high $\Delta_t$)};

    \end{tikzpicture}
\caption{
Illustration of a reasoning tree. Chain nodes (gray) are positions where the next token is effectively determined; branch nodes (blue) are positions where a semantic decision must be made. Entropy jumps occur at branch nodes. For example, the branch after the token ``via'' corresponds to the choice of proof strategy.
}
\label{fig:reasoning-tree-model}
\vspace{0pt}
\end{figure}
    The first two conditions control how far the power distribution $\Pi_T$ and the proposal distribution $p_{\mathrm{prop}}$ can deviate from the uniform distribution $U$ over the root-to-leaf paths. 
    Here, the parameter $\eta$ controls the amount of deviation, and $\eta=0$ corresponds to perfect symmetry, the hardest regime.
    Condition~3 formalizes our observation in \cref{sec:entropy-cut} that branch nodes have a much larger entropy jump than chain nodes. 
    To simplify the statement, we assume all branch nodes have the same entropy jump $\Delta$.
    Our result extends to the more general case at the cost of a more complicated statement.
    Finally, Condition~4 assumes the reasoning tree is symmetric, so all decisions at a branch point appear ``equally good.''
    Again, this isolates the hardest case for the analysis, and the result extends to more general trees.
    In particular, due to symmetry, this condition implies that each root-to-leaf path has the same number of branch nodes, say $k$.
    We call $k$ the \textit{semantic depth} of the reasoning tree.

    Recall that the \emph{mixing time} $\tau_{\mathrm{mix}}^\lambda(\eps)$ of an MH chain measures how many steps it needs before its distribution is within $\eps$ of the target in TV-distance.\footnote{Formally, $\tau_{\mathrm{mix}}^\lambda(\eps)
        \coloneqq
        \min\inbrace{
            n \geq 0:
            \sup_{\vec{x}}\,
            \tv{P_\lambda^n(\vec{x},\cdot)}{\Pi_T}
            \leq \eps
        },$ where the supremum is over root-to-leaf paths.}
    Our main result compares the mixing times $\tau_{\mathrm{mix}}^{\mathrm{ec}}$ and $\tau_{\mathrm{mix}}^{\mathrm{unif}}$ of the entropy-cut and uniform-cut MH chains, respectively. 

\begin{theorem}[Mixing Separation Under Approximate Symmetry]
\label{thm:approx-symmetric-mixing-separation}
    Consider the reasoning-tree model under \cref{cond:approx-symmetric-reasoning-tree}.
    Fix $\beta > 0$ and set the number of stages $K = 1$. 
    Let the number of branch nodes in a root-to-leaf path be $k$ (\ie{}, the semantic depth is $k$).\footnote{Due to Item~4 in \cref{cond:approx-symmetric-reasoning-tree}, the number of branch nodes in each root-to-leaf path is the same.}
    Then, for every $\eps\in (0,\nfrac{1}{2}),$
    \begin{enumerate}%
        \item The entropy-cut chain satisfies
            $\tau_{\mathrm{mix}}^{\mathrm{ec}}(\eps)
                \lesssim
                e^{2\eta} k \log{\nfrac{1}{\eps}};$
        \item Moreover, if the first branch node is at depth $b_1 \leq T/2$, then $\tau_{\mathrm{mix}}^{\mathrm{unif}}(\eps)
        =
        \Omega_{\eps}(\nfrac{T}{b_1}).$
    \end{enumerate}
\end{theorem}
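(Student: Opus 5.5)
For the upper bound, I would use a coupling argument in which the $k$ branch nodes act as coordinates. Under Item~3 of \cref{cond:approx-symmetric-reasoning-tree}, $\lambda_\beta(\cdot;\vec{x})$ is uniform over the $k$ branch depths $b_1(\vec{x}),\dots,b_k(\vec{x})$ of the current path. By Item~4 these depths coincide for all paths: isomorphic subtrees give every path the same branch depths $b_1<\dots<b_k$. Consequently $\lambda_\beta(m;\vec{x})=\lambda_\beta(m;\vec{x}')=1/k$ for $m\in\{b_1,\dots,b_k\}$, and the cut-ratio factor in \cref{eq:acceptance-probability} equals $1$. Items~1--2 then give
\[
A_m(\vec{x},\vec{x}')\;\ge\; e^{-2\eta}\,\frac{U(\vec{x}')\,p_{\mathrm{prop}}(x_{m:T}\mid x_{0:m-1})}{U(\vec{x})\,p_{\mathrm{prop}}(x'_{m:T}\mid x_{0:m-1})}\wedge 1 .
\]
Since $U(\vec{x}')/U(\vec{x})$ factorizes along suffixes, one can argue that the effective kernel, proposal times acceptance, dominates $e^{-2\eta}$ times the ``ideal'' move. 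The ideal move resamples the suffix from cut $b_j$ exactly from $U(\cdot\mid x_{0:b_j-1})$. Concretely, for any target suffix $s$ I would aim for
\[
P(\vec{x},(x_{<b_j},s))\ \ge\ \tfrac{1}{k}\,\min\{p_{\mathrm{prop}}(s\mid\cdot),\,U(s\mid\cdot)\,e^{-2\eta}\cdots\}\ \gtrsim\ \tfrac{e^{-2\eta}}{k}\,U(s\mid x_{<b_j}),
\]
using Item~2 to lower bound $p_{\mathrm{prop}}\ge e^{-\eta}U$.

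With this minorization in hand, I would argue in one of two ways. The first route is a Doeblin condition: cutting at $b_1$ happens with probability $1/k$, and a cut at $b_1$ (or at the first branch) regenerates the entire path. Hence $P(\vec{x},\cdot)\ge \frac{e^{-2\eta}}{k}\,U(\cdot)\ge \frac{e^{-3\eta}}{k}\,\Pi_T(\cdot)$, which gives $\tau_{\mathrm{mix}}\lesssim e^{O(\eta)}k\log(1/\eps)$ by the standard Doeblin bound $\|P^n(\vec{x},\cdot)-\Pi_T\|\le(1-\delta)^n$. The main obstacle is tracking the exponent of $e^{\eta}$ carefully enough to reach $e^{2\eta}$ rather than $e^{3\eta}$. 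My first attempt would be to compare directly with $\Pi_T$, using Item~1 once and Item~2 once in the acceptance ratio $\Pi_T(\vec{x}')p_{\mathrm{prop}}(\vec{x}_{\ge m})/(\Pi_T(\vec{x})p_{\mathrm{prop}}(\vec{x}'_{\ge m}))$. The minorization constant is then $\min\{p_{\mathrm{prop}}(\vec{x}'),\,\Pi_T(\vec{x}')\cdot p_{\mathrm{prop}}(\vec{x})/\Pi_T(\vec{x})\}$, so I would bound the ratio $p_{\mathrm{prop}}(\vec{x})/\Pi_T(\vec{x})$ from below by $e^{-2\eta}$ and $p_{\mathrm{prop}}(\vec{x}')\ge e^{-2\eta}\Pi_T(\vec{x}')$.

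For the lower bound, I would use a bottleneck or first-change argument. Consider two paths that differ at the first branch $b_1$; their sets $S_c$, indexed by the child chosen at $b_1$, each have $\Pi_T$-mass at most $e^{2\eta}/d$ with $d\ge 2$ children, so some $S_c$ has mass $\le 1/2\cdot e^{O(\eta)}$. More simply, under the uniform cut with $\ell=T$, changing the token at depth $b_1$ requires $m\le b_1$, which occurs with probability $b_1/T$ per step. Starting from $\vec{x}\in S_c$, after $n$ steps the chain remains in $S_c$ with probability at least $(1-b_1/T)^n$. If $n\le cT/b_1$ with $c=c(\eps)$ small, this exceeds $1-\Pi_T(S_c^{\complement})+2\eps$ in the relevant case. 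For this I would pick $c$ with $\Pi_T(S_c)\le 1-\delta$, where $\delta>0$ follows from symmetry, and require $(1-b_1/T)^n>\Pi_T(S_c)+\eps$. The resulting TV distance exceeds $\eps$, giving $\tau^{\mathrm{unif}}_{\mathrm{mix}}(\eps)=\Omega_\eps(T/b_1)$. The assumption $b_1\le T/2$ ensures that $(1-b_1/T)^n\ge e^{-2nb_1/T}$.
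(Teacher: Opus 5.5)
Your proposal is correct and follows essentially the paper's route. For the upper bound, the cut at $b_1$ occurs with probability $1/k$ and gives the Doeblin minorization $P_{\mathrm{ec}}(\vec{x},\cdot)\ge \frac{1}{k}e^{-2\eta}\Pi_T(\cdot)$. Your ``direct comparison'' bounding $p_{\mathrm{prop}}/\Pi_T\ge e^{-2\eta}$ is exactly the paper's independence-sampler bound $M_1\le e^{2\eta}$, so you can drop the $e^{-3\eta}$ detour through $U$. For the lower bound, you use the same survival estimate $(1-b_1/T)^n$ on a first-branch class.

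One tightening is needed there. Choose the class by pigeonhole as the lightest child of the first branch, so that $\Pi_T(S)\le 1/d_1\le 1/2$, rather than via an unspecified $\delta$ or an $e^{O(\eta)}$-inflated bound. This guarantees $\Pi_T(S)+\eps<1$ for every $\eps\in(0,1/2)$ and keeps the constant depending on $\eps$ alone.
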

Thus, for $\eta = O(1)$ and constant accuracy, entropy-cut mixes in $O(k)$ steps while uniform-cut requires $\Omega(T/b_1)$ steps.
\takeaway{Thus, the relevant scale for entropy-cut is the number of semantic decisions $k$, whereas for uniform-cut it is the token depth $T/b_1$ of the earliest decision.}
When $k \ll T/b_1$, the separation is large: if the first decision occurs after $O(1)$ tokens and $k = o(T)$ (as in \cref{fig:conductance-reasoning}), entropy-cut mixes in $o(T)$ steps while uniform-cut requires $\Omega(T)$.
The proof of \cref{thm:approx-symmetric-mixing-separation} appears in \cref{sec:proof-approx-symmetric-mixing-separation}.

\section{Evaluation of Performance on Reasoning and Instruction Following}\label{sec:experiments}
In this section, we present our main empirical results. 
\subsection{Setup}\label{sec:experiments:setup}
We use a standard suite of reasoning benchmarks across math, coding, and STEM.
All of our evaluations are in the \emph{single-shot} setting, \ie{}, with a single response string.
We report the average scores across 8 independent runs for all datasets except AIME26, which we repeat 64 times.
Each run is completed on a single H200 GPU.
Our implementation leverages the \texttt{\small vllm v0.9.2} inference library~\citep{kwon2023efficient}.

\paragraph{Reasoning and Instruction-Following Benchmarks.} 
We evaluate on MATH500, HumanEval, GPQA Diamond, and AIME26, covering competition mathematics, code generation, graduate-level science questions, and olympiad-style mathematical reasoning. Dataset details and evaluation protocols are given in \cref{sec:appendix:benchmark-details}.

\begin{table}[tbh!]
\centering
\vspace{0pt}
\small
\setlength{\tabcolsep}{4pt}
\renewcommand{\arraystretch}{0.9}
\caption{\textbf{Entropy-Cut MH (ours) is competitive with and often improves over prior (power)
sampling techniques across model families and tasks.} We benchmark our sampler on
MATH500, HumanEval, GPQA Diamond, and AIME26. All scores are
percentages. We bold the best score in each column within a model family and
underline our scores (last row per block). 
}
\label{tab:main}
\begin{tabular}{@{}llcccc@{}}
\toprule
Model & Method & MATH500 & HumanEval & GPQA Diamond & AIME26 \\
\midrule
\multirow{6}{*}{\texttt{\small Qwen2.5-7B}}
  & Standard                  & 35.9 & 33.0 & 29.4 & 2.0 \\
  & Low-Temperature       & 62.3 & 64.3 & 29.5 & 5.7 \\
  & SMC                   & 69.8 & 64.9 & 28.8 & 7.4 \\
  & TMC                   & 70.4 & 66.2 & 27.5 & \textbf{9.4} \\
  & Uniform-Cut MH           & 67.4 & 66.8 & 29.4 & 8.2 \\
  & Entropy-Cut MH (ours) & \textbf{\underline{71.9}} & \textbf{\underline{68.9}} & \textbf{\underline{30.2}} & \textbf{\underline{9.4}} \\
\midrule
\multirow{6}{*}{\texttt{\small Qwen2.5-Math-7B}}
  & Standard                  & 50.1 & 31.0 & 29.0 & 4.8 \\
  & Low-Temperature       & 68.3 & 55.8 & 33.1 & 6.4 \\
  & SMC                   & 78.9 & 55.2 & 32.0 & 11.8 \\
  & TMC                   & 78.2 & 51.2 & 32.5 & 10.6 \\
  & Uniform-Cut MH           & 73.1 & 59.8 & 32.4 & 9.5 \\
  & Entropy-Cut MH (ours) & \textbf{\underline{79.0}} & \textbf{\underline{59.9}} & \textbf{\underline{34.1}} & \textbf{\underline{13.1}} \\
\midrule
\multirow{6}{*}{\texttt{\small Qwen3-8B-Base}}
  & Standard                  & 58.5 & 47.6 & 31.7 & 3.2 \\
  & Low-Temperature       & 77.3 & 73.7 & 37.6 & 6.4 \\
  & SMC                   & 79.3 & 76.7 & 38.1 & 10.1 \\
  & TMC                   & 79.7 & 75.2 & 38.6 & 9.8 \\
  & Uniform-Cut MH           & 79.5 & 75.8 & 38.8 & 9.5 \\
  & Entropy-Cut MH (ours) & \textbf{\underline{80.2}} & \textbf{\underline{79.3}} & \textbf{\underline{40.0}} & \textbf{\underline{10.3}} \\
\midrule
\multirow{6}{*}{\texttt{\small Phi-3.5-mini-instruct}}
  & Standard                  & 42.9 & 40.5 & 27.8 & 0.6 \\
  & Low-Temperature       & 45.3 & 62.2 & 32.5 & 1.6 \\
  & SMC                   & \textbf{49.8} & 65.2 & 31.0 & 1.7 \\
  & TMC                   & 49.3 & \textbf{65.6} & 31.5 & 3.0 \\
  & Uniform-Cut MH           & 49.1 & 64.9 & 31.8 & 2.9 \\
  & Entropy-Cut MH (ours) & \underline{49.7} & \textbf{\underline{65.6}} & \textbf{\underline{32.8}} & \textbf{\underline{3.1}} \\
\midrule
\multirow{6}{*}{\texttt{\small Phi-4-mini-instruct}}
  & Standard                  & 55.5 & 39.9 & 30.7 & 3.5 \\
  & Low-Temperature       & 66.4 & 65.5 & 32.1 & 6.8 \\
  & SMC                   & 68.1 & 66.7 & 31.9 & 8.6 \\
  & TMC                   & 66.1 & 66.1 & 32.1 & 6.9 \\
  & Uniform-Cut MH           & 68.7 & 67.9 & 32.0 & 8.0 \\
  & Entropy-Cut MH (ours) & \textbf{\underline{68.8}} & \textbf{\underline{68.4}} & \textbf{\underline{33.3}} & \textbf{\underline{8.7}} \\
\bottomrule
\end{tabular}
\vspace{0pt}
\end{table}

\paragraph{Language Models.}
Following \citet{karan2026reasoning}, we use the base models \texttt{\small Qwen2.5-7B}, \texttt{\small Qwen2.5-Math-7B}, \texttt{\small Qwen3-8B-Base}, as well as the instruction-tuned models \texttt{\small Phi-3.5-mini-instruct} and \texttt{\small Phi-4-mini-instruct}.

\paragraph{Baselines.}
To demonstrate the efficacy of our algorithm, we compare against the following: %
\begin{itemize}%
    \item \textbf{Standard}: 
    Sampling from the conditional next-token distribution provided by the model.
    
    \item \textbf{Low-Temperature}: 
    Sampling from the exponentiated conditional next-token distribution with exponent (\emph{temperature}) $\tau = \nicefrac1\alpha$.
    
    \item \textbf{Sequential Monte Carlo (SMC)}:  SMC~\citep{pitt1999filtering,del2006sequential,johansen2009tutorial} is a well-studied particle Gibbs method for sampling from distributions over sequences given access to unnormalized probabilities.
    In the context of power sampling, SMC maintains a weighted population of partial completions,
    updates their weights as tokens are decoded, and resamples when the weights become too uneven~\citep{azizi2026power}.
    
    \item \textbf{Twisted Monte Carlo (TMC)}: 
    Another well-studied sequential sampling technique is to \emph{modulate} the conditional next-token distribution with some estimate of the future ``rewards'' (a \emph{twist} function). 
    For power sampling, the twist is taken to be an estimate of the future probabilities of the completion~\citep{ji2026scalable}.
    
    \item \textbf{Uniform-Cut Metropolis--Hastings (MH)}:      
    Metropolis--Hastings with uniform cut-point sampling.
\end{itemize}
As SMC and TMC do not have public implementations, we implement our own following the description in their respective works.
\vspace{0pt}

\paragraph{Hyperparameters.}
Unless otherwise stated, we follow the hyperparameter choices of \citet{karan2026reasoning}.
We set the maximum length $T = 3072$ and block size $B = 3072/16=192$ for MATH500, HumanEval, and double $T$ and $B$ for the more challenging GPQA Diamond and AIME26 datasets.
We set the power exponent $\alpha=4.0$ for all datasets except for HumanEval, for which we found a higher $\alpha=5.0$ to perform better.
The cut law exponent is set to $\beta=4.0$ for all datasets.
The number of MH transition steps is set to $N_{\mathrm{MCMC}} = 10$.
We take the proposal distribution $p_{\mathrm{prop}}$ to be the low-temperature distribution with temperature $\tau = \nicefrac1\alpha$.
The hyperparameters for SMC and TMC are based on their best-reported settings.
See \Cref{apx:experiments} for more details about SMC, TMC, and hyperparameter ablations. 

\subsection{Results}
Our main findings are displayed in \Cref{tab:main}.
\takeaway{Across base and instruction-tuned models from different families, our sampling algorithm consistently improves accuracy on reasoning tasks.}
In particular, using \texttt{\small Qwen2.5-7B}, we observe a \textbf{+36.0} gain on MATH500 and a \textbf{+35.9} gain on HumanEval relative to standard sampling.
The consistent improvement of entropy-cut over uniform-cut MH across model families and tasks supports our theoretical prediction (\Cref{thm:approx-symmetric-mixing-separation}) that cutting at semantic decision points yields faster mixing, while reaffirming that improved power sampling can further extract the latent reasoning capabilities of existing models~\citep{karan2026reasoning}.

\subsection{Analysis}
We analyze several important characteristics of various (power) sampling algorithms and defer additional details, such as hyperparameter ablations and running times, to \Cref{apx:experiments}.

\paragraph{Sequence Log-Probabilities.}
We plot the distribution of the sequence log-likelihood values computed via the base model (\texttt{\small Qwen2.5-Math-7B}) on the MATH500 dataset across 8 random seeds in \Cref{fig:SequenceLogprobsSubset}.
In addition, the 25th, 50th (median), and 75th percentile values are marked with dashed, solid, and dotted vertical lines.
We observe that entropy-cut MH samples from higher-likelihood regions than standard, low-temperature, and uniform-cut MH.
See \Cref{apx:experiments:likelihood-confidence} for average \emph{confidence} plots.

\begin{figure}[tb!]
    \centering
    \vspace{0pt}
    \includegraphics[width=\linewidth]{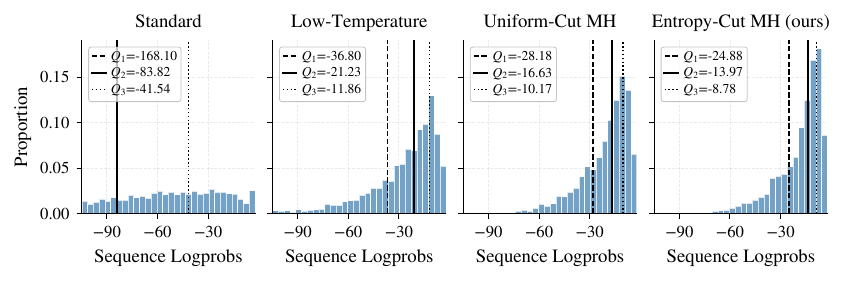}
    \vspace{0pt}
    \caption{\textbf{Entropy-Cut MH (ours) samples from higher-probability regions than baselines.}
    We plot \texttt{\small Qwen2.5-Math-7B} log-likelihoods for MATH500 responses across samplers. The 25th ($Q_1$), 50th ($Q_2$; median), and 75th ($Q_3$) percentile values are marked with dashed, solid, and dotted vertical lines, respectively.}
    \label{fig:SequenceLogprobsSubset}
\vspace{0pt}
\end{figure}

\paragraph{Diversity and Pass@$k$.}
We plot the pass@$k$ accuracy (at least one correct answer among $k$) in \Cref{fig:PassAtK} using an unbiased estimator~\citep{chen2021evaluating}.
This demonstrates that our method maintains diversity comparable to standard sampling despite its improved pass@1 performance.

\begin{figure}[tb!]
    \centering
    \begin{minipage}[t]{0.48\linewidth}
        \vspace{0pt}
        \centering
        \includegraphics[width=\linewidth]{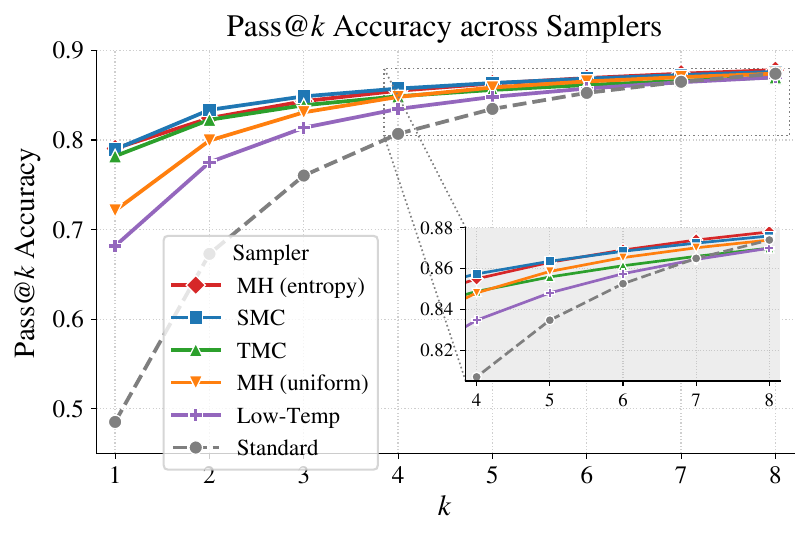}
    \end{minipage}
    \hfill
    \begin{minipage}[t]{0.47\linewidth}
        \vspace{0pt}
        \caption{\textbf{Entropy-Cut MH (ours) maintains diversity across multiple passes despite the improved pass@1 performance.} We plot \texttt{\small Qwen2.5-Math-7B} pass@$k$ accuracy for MATH500 responses across samplers.}
        \label{fig:PassAtK}
    \end{minipage}
    \vspace{0pt}
\end{figure}

\section{Conclusion, Limitations, and Future Work}
\label{sec:conclusion}
We introduced Entropy-Cut Metropolis--Hastings, a simple modification of the stagewise sampler of \citet{karan2026reasoning} that places cuts at positions of high next-token entropy rather than uniformly over tokens, focusing proposals on points where the model appears to face a genuine choice.
This preserves the same power-distribution target while better matching the structure of reasoning traces.
Our theory shows that, in a reasoning-tree model, entropy-cut mixing scales with the number of semantic decisions rather than the full token length, and empirically, Entropy-Cut MH improves performance across models and reasoning benchmarks while maintaining sample diversity.

Several directions remain open.
Our experiments focus on small and medium open models; evaluating entropy-cut sampling at larger scale would clarify how much further reasoning performance can be extracted from stronger base distributions.
We also study single-shot completions, but many practical reasoning systems revise partial solutions, call tools, or branch across attempts; extending entropy-cut proposals to these settings could make the sampler a useful component in broader inference-time reasoning.
Finally, our entropy proxy is deliberately simple; future work could learn or adapt the cut distribution from richer signals such as model uncertainty, verifier feedback, or cross-sample disagreement, while retaining the Metropolis--Hastings correction that keeps the target distribution unchanged.

\subsection*{Acknowledgments}
Felix Zhou acknowledges the support of the Natural Sciences and
Engineering Research Council of Canada (NSERC). Quanquan C. Liu and Felix Zhou are supported in part by the National Science Foundation (NSF) under Grant \#CCF-2453323 and a Google Academic Research Award.

\clearpage

\printbibliography

\appendix
\crefalias{section}{appendix}
\crefalias{subsection}{subappendix}
\crefalias{subsubsection}{subsubappendix}

\addtocontents{toc}{\protect\setcounter{tocdepth}{2}} %
\newpage

\section{Additional Related Work}
    \label{sec:appendix:related-work}
    In this section, we discuss additional related lines of work and how they compare to our work.

    \paragraph{Speculative Decoding.}
    The structural idea of drawing a candidate from a cheap proposal and accepting it with a ratio-based rule resembles \emph{speculative decoding} \citep{leviathan2023fast,chen2023accelerating}, in which a small draft model proposes continuations that are then accepted or rejected according to ratios of probabilities under a larger target model. The two methods share the accept--reject template but serve different purposes: speculative decoding accelerates exact sampling from a fixed autoregressive target, whereas MH produces samples from a target, here $\Pi_T$, that is not itself autoregressive. 
    
\paragraph{Entropy as an Uncertainty Signal.}
Several works use entropy or related uncertainty estimates from a language model's predictive distribution to identify unreliable generations. In autoregressive sequence prediction, token- and sequence-level uncertainty have been studied for error detection and out-of-domain detection \citep{malinin2021uncertainty}. For factual generation, \citet{fadeeva2024factchecking} use token-level uncertainty to identify claims that are likely to be unreliable. These works use uncertainty primarily as a diagnostic for the generated output. We use the same broad signal for a different purpose: rather than deciding whether a completed answer should be trusted, we use local changes in uncertainty to decide where an MH proposal should cut and resample a reasoning trajectory.

\paragraph{Entropy and Reasoning Decisions.}
Closest to our setting are recent works that treat high-entropy tokens as places where a reasoning trajectory branches. \citet{li2026entropy} propose entropy-gated branching, which allocates additional test-time search at high-entropy moments where multiple continuations are plausible. \citet{wang2025beyond} study entropy patterns in chain-of-thought reasoning and find that a small fraction of high-entropy tokens can act as ``forks'' that steer the model toward different reasoning paths. Some work also considers changes in uncertainty over time: \citet{zhang2025entropy} use changes in entropy and variance-entropy across consecutive reasoning steps to decide whether to deepen, branch, or stop exploration. Taking inspiration from these observations, we use entropy differences as a lightweight proxy for decision points in the model's reasoning. Unlike beam- or tree-expansion methods, however, we do not allocate additional samples at every such point. Instead, we use the signal to choose cut locations inside an MH sampler, with the goal of revisiting uncertain reasoning decisions while avoiding cuts inside long low-entropy stretches of local computation.

\paragraph{Semantic Entropy.}
A related line of work argues that token-level entropy can be misleading in open-ended generation, because many different strings may express the same meaning. \citet{farquhar2024detecting} therefore propose \emph{semantic entropy}, which measures uncertainty over semantic equivalence classes of sampled responses rather than over surface forms, and use it to detect confabulations in LLM outputs. Follow-up work such as \citet{kossen2024semantic} aims to approximate semantic entropy more efficiently from model representations. This line of work supports the general idea that uncertainty in the model distribution can reveal meaningful properties of the generation, but it also highlights a limitation of purely token-level entropy. In this paper we do not use semantic entropy: our cut rule is based on token-level entropy changes because they are cheap to compute during generation and naturally aligned with token-level cut positions. Understanding whether semantic entropy, or another meaning-aware uncertainty measure, can further improve the choice of cut locations is an interesting direction for future work.
 
\paragraph{Critical Tokens and Decision Points.}
Several recent works observe that reasoning trajectories often contain a small number of high-leverage tokens.
The \emph{Phi-4} technical report introduces \emph{pivotal token search}, which finds tokens whose generation substantially changes the probability of final correctness and uses them to construct targeted DPO pairs \citep{abdin2024phi}.
\citet{lin2024criticaltokens} identify \emph{critical tokens} in incorrect mathematical reasoning traces and show that modifying these tokens can substantially improve downstream correctness.
\citet{li2025critical} study the related phenomenon of \emph{critical windows}, where model behavior changes sharply over a narrow region of the generation process.
\citet{karan2026reasoning} connect this idea to power sampling, showing that the power distribution can favor tokens whose futures concentrate on high-likelihood completions rather than tokens that merely look locally likely.
Our method is in the same spirit, but uses entropy jumps as a verifier-free inference-time proxy for these decision points, allowing the Metropolis--Hastings sampler to revisit consequential choices without changing its target distribution.

\vspace{-5mm}
\section{Additional Discussion and Remarks}
\vspace{-3mm}
\subsection{Remarks on Metropolis--Hastings}
    
\begin{remark}[MH Acceptance Ratio Is Tractable for $\Pi_T$]\label{rem:mh-tractability}
    Because $p$ is autoregressive, $p(\vec{x}) = \prod_{t=0}^{\ell} p(x_t\mid x_{<t})$ is computable from a single forward pass, and the MH ratio depends on $p$ only through $p(\vec{x}')^\alpha / p(\vec{x})^\alpha$. The intractable sum defining $Z_{T,\alpha}$ therefore never needs to be computed. The cut-based proposal kernel below can likewise be sampled by a single autoregressive pass, so both the acceptance ratio and the proposal run in polynomial time despite the exponentially large state space.
\end{remark}
\vspace{-8mm}

\begin{remark}[Explicit Form of the Cut-Based Proposal Kernel]\label{rem:cut-kernel}
    For a cut index $m$, the proposal kernel that retains $x_{0:m-1}$ and redraws $x_{m:\ell}$ from $p_{\mathrm{prop}}$ is
    \[
        Q_m(\vec{x},\vec{x}')
        =
        \begin{cases}
            p_{\mathrm{prop}}\inparen{x'_{m:\ell}\mid x_{0:m-1}}, & \text{if } x'_{0:m-1}=x_{0:m-1}\,,\\
            0\,, & \text{otherwise}\,,
        \end{cases}
    \]
    where $p_{\mathrm{prop}}\inparen{x_{m:\ell}\mid x_{0:m-1}}
        \coloneqq
        \prod_{t=m}^{\ell} p_{\mathrm{prop}}\inparen{x_t\mid x_{<t}}.$
    The factor $\lambda(m;\vec{x}')/\lambda(m;\vec{x})$ in the acceptance probability corrects for the state-dependence of the cut law and is what ensures that $\Pi_\ell$ is the stationary distribution of the resulting chain.
\end{remark}

\vspace{-5mm}
\subsection{Irreducibility and Aperiodicity of the Stagewise MH Chain}
\label{sec:appendix:irreducibility-aperiodicity}
\label{sec:appendix:mh-assumptions-hold}

In this section, we recall the standard conditions under which the stagewise Metropolis--Hastings chain converges to its target distribution, and explain why they hold in our setting.

Fix a stage length $\ell$, let
\[
    \Omega_\ell \coloneqq \inbrace{\vec{x} : \Pi_\ell(\vec{x}) > 0}
\]
be the support of the target distribution at that stage, and let $P_\ell$ denote one step of the corresponding MH chain.
We adopt the convention that the earliest admissible cut redraws the entire continuation; with the indexing $x_{0:\ell}$ used in the main text, this corresponds to allowing $m=0$.
\begin{definition}[Irreducible]
    An MH chain $P_\ell$ is said to be \emph{irreducible} on $\Omega_\ell$ if for every $\vec{x},\vec{y}\in\Omega_\ell$, there exists $r\ge 1$ such that $P_\ell^r(\vec{x},\vec{y}) > 0.$
\end{definition}
\vspace{-7mm}
\begin{definition}[Aperiodic]
    An MH chain $P_\ell$ is said to be \emph{aperiodic} if for every $\vec{x}\in\Omega_\ell$,
    $\gcd \inbrace{r \ge 1 : P_\ell^r(\vec{x},\vec{x}) > 0}$ is 1.
\end{definition}
Since $\Omega_\ell$ is finite and the Metropolis--Hastings construction makes $P_\ell$ reversible with stationary distribution $\Pi_\ell$, irreducibility and aperiodicity imply that $\tv{P_\ell^n(\vec{x},\cdot)}{\Pi_\ell} \to 0$ for every $\vec{x}\in\Omega_\ell$ as $n\to\infty$; see, \eg{}, \citet{levin2017markov}.

It remains to explain why these conditions hold in our setting. Assume first that the proposal model $p_{\mathrm{prop}}$ has full support on $\Omega_\ell$, in the sense that whenever $\vec{y}\in\Omega_\ell$, every suffix of $\vec{y}$ has positive probability under $p_{\mathrm{prop}}$ conditional on its preceding prefix. Assume also that the cut law has full support on the admissible cut locations, \ie{},
\[
    \lambda(m;\vec{x}) > 0
    \qquad
    \text{for every } \vec{x}\in\Omega_\ell
    \text{ and every admissible } m.
\]
This is immediate for the uniform-cut baseline, and for entropy-cut, it can be enforced by adding an arbitrarily small uniform floor, \eg{}, $\tilde{\lambda}_{\beta,\varepsilon}(m;\vec{x})
    \propto
    \Delta_m(\vec{x})^\beta + \varepsilon,$ ($\eps>0$).
Under these two support assumptions, irreducibility and aperiodicity are immediate. For irreducibility, fix any $\vec{x},\vec{y}\in\Omega_\ell$. With positive probability, the chain chooses the earliest cut and then proposes $\vec{y}$ exactly by redrawing the entire continuation from $p_{\mathrm{prop}}$; by the support assumption, this proposal probability is strictly positive. The reverse proposal from $\vec{y}$ back to $\vec{x}$ is also positive, and both $\Pi_\ell(\vec{x})$ and $\Pi_\ell(\vec{y})$ are positive on $\Omega_\ell$, so the MH acceptance probability is strictly positive as well. Hence $P_\ell(\vec{x},\vec{y}) > 0$, so the chain is irreducible. For aperiodicity, from any state $\vec{x}\in\Omega_\ell$, there is positive probability of choosing any admissible cut and then redrawing exactly the current suffix, thereby proposing $\vec{x}$ itself. Such a proposal is accepted with probability one, so $P_\ell(\vec{x},\vec{x}) > 0$ for every $\vec{x}$. Thus the chain has a self-loop at every state and therefore period one.

\subsection{Low-Temperature Sampling Is Not Power Sampling}
\label{sec:appendix:low-temp-fails}
In this section, we show that sampling from the power distribution $\Pi_T(x_{0:T}) \propto p(x_{0:T})^\alpha$ is not equivalent to low-temperature sampling at temperature $1/\alpha$.

Concretely, let $\Phi_T$ denote the low-temperature distribution where each next-token conditional $p(\cdot \mid x_{<t})$ is independently raised to the $\alpha$-th power and renormalized. 
Both power sampling and low-temperature sampling alter the base model, but they do so in different ways: $\Phi_T$ alters and normalizes each next-token distribution locally, whereas $\Pi_T$ raises complete traces to the $\alpha$-th power and normalizes \textit{globally}. 
We show that the two distributions can be almost disjoint, even for sequences of just two tokens.
\begin{proposition}
\label{prop:low-temp-fails}
Fix any $\alpha > 1$ and any $\eps > 0$. There exists an autoregressive distribution $p$ over two-token strings such that the power distribution $\Pi_1$ and the low-temperature distribution $\Phi_1$ satisfy
\[
    \tv{\Pi_1}{\Phi_1}
    \ge 1 - \eps \,.
\]
\end{proposition}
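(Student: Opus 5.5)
We will build an explicit two-token example in which the first token has two values, $a$ and $b$. The idea is to make the low-temperature sampler and the power distribution pick opposite first tokens. The local sharpening in $\Phi_1$ only sees the marginal $p(x_0)$. The global sharpening in $\Pi_1$ instead rewards a first token whose continuation is concentrated.

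Concretely, fix $\delta\in(0,1/2)$ and a large integer $n$. Set $p(x_0=a)=1/2+\delta$ and $p(x_0=b)=1/2-\delta$. Given $x_0=a$, let $x_1$ be uniform over $n$ tokens. Given $x_0=b$, let $x_1$ be a single deterministic token $c$.

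The low-temperature distribution is easy. Under $\Phi_1$, the first token is $a$ with probability
\[
\frac{(1/2+\delta)^\alpha}{(1/2+\delta)^\alpha+(1/2-\delta)^\alpha}.
\]
This is strictly larger than $1/2$, but on its own it is not close to $1$. To push it towards $1$, we take $\delta$ close to $1/2$: setting $p(b)=\gamma$ small gives $\Phi_1(x_0=b)\le \gamma^\alpha/(1-\gamma)^\alpha\to 0$. So we use $p(a)=1-\gamma$ and $p(b)=\gamma$ rather than the $\delta$ parametrization. The second token does not affect the first-token marginal of $\Phi_1$, since each conditional is normalized separately.

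Next we compute the first-token marginal of $\Pi_1$. The $a$-branch contributes total unnormalized mass
\[
n\bigl((1-\gamma)/n\bigr)^\alpha=(1-\gamma)^\alpha n^{1-\alpha},
\]
and the $b$-branch contributes $\gamma^\alpha$. Hence
\[
\Pi_1(x_0=a)=\frac{(1-\gamma)^\alpha n^{1-\alpha}}{(1-\gamma)^\alpha n^{1-\alpha}+\gamma^\alpha}.
\]
Because $\alpha>1$, for fixed $\gamma$ this tends to $0$ as $n\to\infty$.

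To finish, we bound the total variation distance from below by the first-token event: $\tv{\Pi_1}{\Phi_1}\ge \Phi_1(x_0=a)-\Pi_1(x_0=a)$. First choose $\gamma$ so that $\Phi_1(x_0=a)\ge 1-\eps/2$. Then choose $n$ large enough that $\Pi_1(x_0=a)\le\eps/2$; concretely, it suffices that $n^{\alpha-1}\ge 2(1-\gamma)^\alpha/(\eps\gamma^\alpha)$. This gives $\tv{\Pi_1}{\Phi_1}\ge 1-\eps$.

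The only delicate point is the order of the two limits: $\gamma$ must be fixed before $n$ is sent to infinity. The fanout $n$ has to beat $\gamma^{-\alpha/(\alpha-1)}$, and that is always achievable for any fixed $\alpha>1$. The construction assumes the vocabulary may be taken to have at least $n+2$ tokens. If the vocabulary is fixed, the same argument can be run with longer sequences instead, but the statement allows $p$ to be chosen freely.
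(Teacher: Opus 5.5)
Your proposal is correct and is essentially the paper's construction with the two first-token labels swapped: your $\gamma$ plays the role of the paper's $1/(1+R)$, and, as in the paper, you first fix the marginal skew so that $\Phi_1$ concentrates on the high-marginal, high-fanout first token, then take the fanout large enough that $n^{\alpha-1}$ dominates $\gamma^{-\alpha}/\eps$, which forces $\Pi_1$ onto the concentrated branch. The abandoned $\delta$ parametrization at the start adds nothing and can be deleted.
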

The construction is simple. One first-token choice has smaller marginal probability under the base model but leads to a single high-probability completion. The other has larger marginal probability but spreads its mass over many low-probability completions. Low-temperature sampling favors the second choice because it sharpens marginal next-token probabilities locally; the power distribution favors the first because it sharpens complete traces, rewarding the concentrated completion.

\begin{proof}
Let $\delta = \eps/2$. Choose $R > 1$ large enough that
$\frac{1}{1+R^\alpha} \le \delta,$
and then choose $N$ large enough that
$\frac{R^\alpha}{N^{\alpha-1}} \le \delta.$
Consider two-token strings whose first token is either $a$ or $b$. Let the second-token alphabet contain a special token $\star$ and tokens $1,\ldots,N$. Define the base distribution $p$ by
\[
    p(a,\star) = \frac{1}{1+R}
    \qquadand 
    p(b,i) = \frac{R}{N(1+R)}
    \quad \text{for } i=1,\ldots,N\,,
\]
and assign probability zero to all other two-token strings. Equivalently, the base model chooses $a$ with probability $1/(1+R)$, chooses $b$ with probability $R/(1+R)$, completes $a$ deterministically with $\star$, and completes $b$ uniformly among $N$ choices.

Let $A$ be the event that the first token is $a$. Under low-temperature sampling, the first-token conditional is sharpened locally, so
\[
    \Phi_1(A)
    =
    \frac{\left(\nfrac{1}{1+R}\right)^\alpha}
    {
    \left(\nfrac{1}{1+R}\right)^\alpha
    +
    \left(\nfrac{R}{1+R}\right)^\alpha
    }
    =
    \frac{1}{1+R^\alpha}
    \le \delta \,.
\]
Under the power distribution, however, the probability of $A$ is obtained by sharpening the probabilities of complete two-token strings:
\[
    \Pi_1(A)
    =
    \frac{
    \left(\nfrac{1}{1+R}\right)^\alpha
    }
    {
    \left(\nfrac{1}{1+R}\right)^\alpha
    +
    N\left(\nfrac{R}{(N+NR)}\right)^\alpha
    }
    =
    \frac{1}{1 + R^\alpha/N^{\alpha-1}}
    \ge \frac{1}{1+\delta}
    \ge 1-\delta \,.
\]
Therefore,
\[
    \tv{\Pi_1}{\Phi_1}
    \ge
    \bigl| \Pi_1(A) - \Phi_1(A) \bigr|
    \ge
    1 - 2\delta
    =
    1-\eps\,.\qedhere{}
\] 
\end{proof}

\subsection{Algorithm: Entropy-Cut Stagewise Metropolis--Hastings}
In this section, we present the pseudocode for our algorithm.

Our algorithm is the special case of the meta-algorithm in \cref{alg:meta} obtained by taking the cut distribution to be the entropy-cut distribution $\lambda_\beta$ from \cref{def:entropy-cut-law}. Thus, instead of treating every token position as equally likely to be revised, the sampler places more mass on positions where the model's uncertainty has just increased, as measured by the positive entropy jump $\Delta_t$. In this way, cut points are biased toward likely decision points in the reasoning trace.

Unlike the uniform-cut case, the cut distribution now depends on the current state, so the cut-law ratio does not cancel from the Metropolis--Hastings acceptance probability. This correction is precisely what ensures that the resulting chain still targets the same power distribution $\Pi_T$. The resulting sampler is shown in \cref{alg:samp}. As before, we keep the proposal model $p_{\mathrm{prop}}$ explicit below; in our experiments, it is instantiated as a low-temperature version of the base model.

\begin{figure}[ht!]
\begin{minipage}{\linewidth}
\small
\begin{algorithm2e}[H]
\DontPrintSemicolon
\LinesNumbered
\caption{Entropy-Cut Stagewise Metropolis--Hastings}
\label{alg:samp}

\SetKwInOut{Input}{Input}
\SetKwInOut{Hyper}{Hyperparameters}
\SetKwInOut{Output}{Output}

\Input{\mbox{base model $p$; proposal model $p_{\mathrm{prop}}$; power $\alpha > 1$; cut power $\beta \ge 0$; length $T$}}
\Hyper{block size $B$; number of MCMC steps per stage $N_{\mathrm{MCMC}}$}
\Output{continuation $x_{0:T}$ approximately distributed according to $\Pi_T$}

\BlankLine
Set $K \gets \lceil T/B \rceil$, $T_0 \gets 0$, and $x^{(0)}$ to the empty continuation\;

\vspace{2mm}

\For{each stage $k \gets 1,2,\dots,K$}{
    Set the $k$-th stage's target length $T_k \gets \min\inbrace{kB,\,T}$\;

    Sample a suffix $s \sim p_{\mathrm{prop}}(\cdot \mid x^{(k-1)})$ of length $T_k - T_{k-1}$\;
    Set the current state $x \gets \sinparen{x^{(k-1)}, s}$\;

    \vspace{4mm}

    \tcpCustom{Metropolis--Hastings chain targeting the power distribution for length-$T_k$}
    \For{each MCMC step $n \gets 1,2,\dots,N_{\mathrm{MCMC}}$}{

        Compute entropy profile $h_t(x)
            \gets
            -\sum_{v\in V} p(v\mid x_{<t}) \log p(v\mid x_{<t})$
            for $t=0,1,\dots,T_k$

        Compute entropy jumps $\Delta_t(x)
            \gets
            \max\!\inbrace{0,\; h_t(x)-h_{t-1}(x)}$
            for $t=1,2,\dots,T_k$

	        \eIf{$\sum_{r=1}^{T_k} \Delta_r(x)^\beta = 0$}{
	            Sample a cut point $m$ uniformly from $\inbrace{1,\dots,T_k}$\;
	        }{
	            Sample a cut point $m \sim \lambda_\beta(\cdot\,;x)$\;
	        }

	        Sample a (new) suffix $s \sim p_{\mathrm{prop}}(\cdot \mid x_{0:m-1})$ of length $T_k - m$\;
	        Set the proposed state as $x^{\mathrm{prop}} \gets \inparen{x_{0:m-1}, s}$\;

        Compute the acceptance probability
        \[
            A_m\!\inparen{x, x^{\mathrm{prop}}}
            \gets
            \min\inbrace{
                1,\,
                \frac{\Pi_{T_k}\inparen{x^{\mathrm{prop}}}}{\Pi_{T_k}\inparen{x}}
                \cdot
                \frac{\lambda_\beta\inparen{m; x^{\mathrm{prop}}}}{\lambda_\beta\inparen{m; x}}
                \cdot
                \frac{p_{\mathrm{prop}}\inparen{x_{m:T_k} \mid x^{\mathrm{prop}}_{0:m-1}}}
                     {p_{\mathrm{prop}}\inparen{x^{\mathrm{prop}}_{m:T_k} \mid x_{0:m-1}}}
            }
        \]

        \mbox{With probability $A_m\!\inparen{x, x^{\mathrm{prop}}}$, \textit{accept} the proposal and set $x \gets x^{\mathrm{prop}}$\phantom{.............................}}
        Otherwise, \textit{reject} the proposal and leave $x$ unchanged\;
    }

    \vspace{2mm}

    Set the $k$-th stage's output as $x^{(k)} \gets x$\;
}

\Return{$x^{(K)}$}\;
\end{algorithm2e}
\end{minipage}
\end{figure}

\subsection{Algorithm: Uniform-Cut Stagewise Metropolis--Hastings}
In this section, we present the pseudocode for the sampler of \citet{karan2026reasoning}.

The sampler of \citet{karan2026reasoning} is the special case of the meta-algorithm in \cref{alg:meta} obtained by taking the cut distribution to be uniform:
\[
	    \lambda_{\mathrm{unif}}(m;\vec{x})
	    \;\equiv\;
	    \frac{1}{\ell}
	    \qquad \text{for each } m \in \inbrace{1,\dots,\ell}.
	\]
Thus every token position is treated as equally likely to be revised.
Under this choice, the cut-law ratio in the Metropolis--Hastings acceptance probability is identically one, so the general meta-algorithm simplifies to the uniform-cut sampler shown in \cref{alg:karan-du}.
In the implementation of \citet{karan2026reasoning}, the proposal model $p_{\mathrm{prop}}$ is taken to be a low-temperature version of the base model, but we keep $p_{\mathrm{prop}}$ explicit below.

\begin{figure}[ht!]
\small 
\begin{minipage}{\linewidth}
\begin{algorithm2e}[H]
\DontPrintSemicolon
\LinesNumbered
\caption{Uniform-Cut Stagewise Metropolis--Hastings}
\label{alg:karan-du}

\SetKwInOut{Input}{Input}
\SetKwInOut{Hyper}{Hyperparameters}
\SetKwInOut{Output}{Output}

\Input{\mbox{base model $p$; proposal model $p_{\mathrm{prop}}$; power $\alpha > 1$; length $T$}}
\Hyper{block size $B$; number of MCMC steps per stage $N_{\mathrm{MCMC}}$}
\Output{continuation $x_{0:T}$ approximately distributed according to $\Pi_T$}

\BlankLine
Set $K \gets \lceil T/B \rceil$, $T_0 \gets 0$, and $x^{(0)}$ to the empty continuation\;

\vspace{2mm}

\For{each stage $k \gets 1,2,\dots,K$}{
    Set the $k$-th stage's target length $T_k \gets \min\inbrace{kB,\,T}$\;

    Sample a suffix $s \sim p_{\mathrm{prop}}(\cdot \mid x^{(k-1)})$ of length $T_k - T_{k-1}$\;
    Set the current state $x \gets \sinparen{x^{(k-1)}, s}$\;

    \vspace{4mm}

    \tcpCustom{Metropolis--Hastings chain targeting the power distribution for length-$T_k$}
	    \For{each MCMC step $n \gets 1,2,\dots,N_{\mathrm{MCMC}}$}{
	        Sample a cut point $m$ uniformly from $\inbrace{1,\dots,T_k}$\;

	        Sample a (new) suffix $s \sim p_{\mathrm{prop}}(\cdot \mid x_{0:m-1})$ of length $T_k - m$\;
	        Set the proposed state as $x^{\mathrm{prop}} \gets \inparen{x_{0:m-1}, s}$\;

        Compute the acceptance probability
        \[
            A_m\!\inparen{x, x^{\mathrm{prop}}}
            \gets
            \min\inbrace{
                1,\,
                \frac{\Pi_{T_k}\inparen{x^{\mathrm{prop}}}}{\Pi_{T_k}\inparen{x}}
                \cdot
                \frac{
                    p_{\mathrm{prop}}\inparen{x_{m:T_k} \mid x^{\mathrm{prop}}_{0:m-1}}
                }{
                    p_{\mathrm{prop}}\inparen{x^{\mathrm{prop}}_{m:T_k} \mid x_{0:m-1}}
                }
            }
        \]

        \mbox{With probability $A_m\!\inparen{x, x^{\mathrm{prop}}}$, \textit{accept} the proposal and set $x \gets x^{\mathrm{prop}}$\phantom{.............................}}
        Otherwise, \textit{reject} the proposal and leave $x$ unchanged\;
    }

    \vspace{2mm}

    Set the $k$-th stage's output as $x^{(k)} \gets x$\;
}

\Return{$x^{(K)}$}\;
\end{algorithm2e}
\end{minipage}
\end{figure}

\subsection{Proof of \cref{thm:approx-symmetric-mixing-separation}}
\label{sec:proof-approx-symmetric-mixing-separation}
    In this section, we prove \cref{thm:approx-symmetric-mixing-separation}.
\begin{proof}[Proof of \cref{thm:approx-symmetric-mixing-separation}]
We divide the proof into three steps.
First, we reduce the chain to the space of branch choices and define the relevant conductance quantities.
Second, we show that entropy-cut has large conductance, while uniform-cut has a low-conductance bottleneck at the first branch.
Third, we convert these conductance estimates into the claimed mixing-time bounds.

\paragraph{Step A (Reduction to Branch Choices and Conductance Notation).}
By Item~4 of \cref{cond:approx-symmetric-reasoning-tree}, all subtrees below the children of any branch node are isomorphic.
This implies that every root-to-leaf path has branch nodes at the same depths
\[
    1 \leq b_1 < b_2 < \cdots < b_k \leq T\,,
\]
and that the $j$-th branch has the same number of children, say $d_j\geq 2$, on every path.
Indeed, before the first branch node there is only one possible path, so the first branch depth $b_1$ is common to all leaves.
At that branch, the child subtrees are isomorphic, so the pattern of future branch depths and the number of children at each future branch are identical in every child subtree.
Repeating this argument inductively gives common branch depths $b_1,\dots,b_k$ and common branching factors $d_1,\dots,d_k$.

Since every non-branch node has only one child, a leaf is uniquely determined by its sequence of branch choices.
Thus, we identify the state space with
\[
    \Omega
    \coloneqq
    [d_1]\times [d_2]\times \cdots \times [d_k]\,.
\]
We write $r=(r_1,\dots,r_k)\in\Omega$ for a branch-choice sequence, and we write $\Pi_T(r)$ for the target mass of the corresponding leaf.
Let $x(r)$ denote the root-to-leaf token sequence corresponding to $r$.
Since all tokens before the first branch are deterministic, the prefix $x(r)_{0:b_1-1}$ is the same for every $r$; denote this common prefix by $x^\circ_{0:b_1-1}$.

For a cut distribution $\lambda$, let $P_\lambda$ denote the one-step MH transition kernel on $\Omega$.
For any set $A\subseteq \Omega$ with $0<\Pi_T(A)\leq 1/2$, define its conductance under $\lambda$ by
\[
    \Phi_\lambda(A)
    \coloneqq
    \Pr_\lambda\!\inparen{X_1\in A^c
    \mid X_0\sim \Pi_T(\cdot\mid A)}
    =
    \frac{1}{\Pi_T(A)}
    \sum_{r\in A}\Pi_T(r)P_\lambda(r,A^c)\,.
    \yesnum\label{eq:proof-local-conductance}
\]
The global conductance is
\[
    \Phi_\lambda
    \coloneqq
    \min_{0<\Pi_T(A)\leq 1/2}\Phi_\lambda(A)\,.
\]

We will use the following proposal-imbalance quantity for the proposal obtained by cutting at the first branch and resampling the suffix from $p_{\mathrm{prop}}$:
\[
    M_1
    \coloneqq
    \sup_{r\in\Omega}
    \frac{\Pi_T(r)}
    {p_{\mathrm{prop}}\inparen{x(r)_{b_1:T}\mid x^\circ_{0:b_1-1}}}\,.
    \yesnum\label{eq:def-M-first-branch-proposal}
\]
This definition uses $p_{\mathrm{prop}}$ directly, rather than introducing a separate full-suffix proposal distribution.

We next bound $M_1$ using the first two items of \cref{cond:approx-symmetric-reasoning-tree}.
For any $r\in\Omega$, Item~1 gives
$\Pi_T(r)
    \leq
    e^\eta U(r).$
Since the prefix before the first branch is deterministic,
\[
    U(r)
    =
    U\inparen{x(r)_{b_1:T}\mid x^\circ_{0:b_1-1}}\,.
\]
Applying Item~2 with the cut prefix $x^\circ_{0:b_1-1}$ gives
\[
    U\inparen{x(r)_{b_1:T}\mid x^\circ_{0:b_1-1}}
    \leq
    e^\eta
    p_{\mathrm{prop}}\inparen{x(r)_{b_1:T}\mid x^\circ_{0:b_1-1}}\,.
\]
Combining the above two yields
$\Pi_T(r)
    \leq
    e^{2\eta}
    p_{\mathrm{prop}}\sinparen{x(r)_{b_1:T}\mid x^\circ_{0:b_1-1}}\,,$
and therefore
\[
    M_1 \leq e^{2\eta}\,.
    \yesnum\label{eq:M-first-branch-bound}
\]

\paragraph{Step B (Conductance Bounds).}
We now prove the conductance estimates that drive the separation.
The statement below isolates the two relevant facts: entropy-cut has uniformly large conductance because it can reopen the first semantic decision with probability $\nfrac{1}{k}$, whereas uniform-cut has a low-conductance bottleneck at the first branch because it reopens that decision only with probability $b_1/T$.

\begin{lemma}[Conductance Bounds]
\label{lem:conductance-bounds-symmetric}
Under the assumptions of \cref{thm:approx-symmetric-mixing-separation}, the entropy-cut chain satisfies
$\Phi_{\mathrm{ec}}(A)
    \geq
    \nfrac{1}{2M_1k}$
for every $A\subseteq \Omega$ with $0<\Pi_T(A)\leq 1/2$.
In particular,
\[
    \Phi_{\mathrm{ec}}
    \geq
    \frac{1}{2M_1k}
    \geq
    \frac{e^{-2\eta}}{2k}\,.
\]
Moreover, there exists a first-branch set $S\subseteq \Omega$ with $0<\Pi_T(S)\leq 1/2$ such that
$\Phi_{\mathrm{unif}}(S)
    \leq
    \nfrac{b_1}{T}.$
Consequently, for this same set $S$,
\[
    \Phi_{\mathrm{ec}}(S)
    \geq
    \frac{T}{2M_1kb_1}\,\Phi_{\mathrm{unif}}(S)
    \geq
    \frac{e^{-2\eta}T}{2kb_1}\,\Phi_{\mathrm{unif}}(S)\,.
    \yesnum\label{eq:first-branch-conductance-separation}
\]
\end{lemma}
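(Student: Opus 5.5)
We prove the entropy-cut lower bound by restricting attention to the single move that cuts at the first branch $b_1$.

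\emph{Probability of cutting at $b_1$.} By Item~3 of \cref{cond:approx-symmetric-reasoning-tree}, every path has $\Delta_t=\Delta$ at exactly the $k$ depths $b_1,\dots,b_k$ and $\Delta_t=0$ elsewhere. Hence $\lambda_\beta(m;x)=1/k$ for $m\in\{b_1,\dots,b_k\}$, independently of the state. In particular the cut-law ratio in \eqref{eq:acceptance-probability} equals one, and the cut lands at $b_1$ with probability $1/k$.

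\emph{Rewriting the first-branch move as an independence sampler.} Conditional on cutting at $b_1$, the move is an independence Metropolis--Hastings step. The proposal is $q(r')\coloneqq p_{\mathrm{prop}}(x(r')_{b_1:T}\mid x^\circ_{0:b_1-1})$ on $\Omega$, and the acceptance probability is $\min\{1,w(r')/w(r)\}$ with $w\coloneqq\Pi_T/q$. Thus
\[
    P_{\mathrm{ec}}(r,A^c)\ \geq\ \frac1k\sum_{r'\in A^c} q(r')\min\Bigl\{1,\frac{w(r')}{w(r)}\Bigr\}
    \ =\ \frac1k\sum_{r'\in A^c}\min\Bigl\{q(r'),\frac{\Pi_T(r')}{w(r)}\Bigr\}.
\]
By \eqref{eq:def-M-first-branch-proposal}, $w\leq M_1$, so $q(r')=\Pi_T(r')/w(r')\geq \Pi_T(r')/M_1$ and $\Pi_T(r')/w(r)\geq \Pi_T(r')/M_1$. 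Therefore each summand is at least $\Pi_T(r')/M_1$, and
\[
    P_{\mathrm{ec}}(r,A^c)\ \geq\ \frac{\Pi_T(A^c)}{kM_1}\ \geq\ \frac{1}{2kM_1},
\]
using $\Pi_T(A^c)\geq 1/2$. Averaging over $r\sim\Pi_T(\cdot\mid A)$ in \eqref{eq:proof-local-conductance} gives $\Phi_{\mathrm{ec}}(A)\geq 1/(2M_1k)$ for every admissible $A$. Combined with \eqref{eq:M-first-branch-bound}, this yields $\Phi_{\mathrm{ec}}\geq e^{-2\eta}/(2k)$.

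\emph{The uniform-cut bottleneck.} Let $S=\{r: r_1\in I\}$ for some $I\subsetneq[d_1]$. We choose $I$ so that $0<\Pi_T(S)\leq 1/2$. Because $d_1\geq 2$, either a single first-branch class already has mass at most $1/2$, or one class has mass above $1/2$ and we take $I$ to be its complement, which is nonempty and has mass below $1/2$. Any transition from $S$ to $S^c$ must change $r_1$, and a cut at position $m$ preserves the prefix $x_{0:m-1}$, which contains the token at depth $b_1$ whenever $m>b_1$. So only cuts $m\leq b_1$ can change $r_1$. Under $\lambda_{\mathrm{unif}}$ such cuts have probability $b_1/T$, and acceptance probabilities are at most one. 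Hence $P_{\mathrm{unif}}(r,S^c)\leq b_1/T$ for every $r\in S$, and averaging gives $\Phi_{\mathrm{unif}}(S)\leq b_1/T$.

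\emph{Combining.} Dividing the lower bound $\Phi_{\mathrm{ec}}(S)\geq 1/(2M_1k)$ by $\Phi_{\mathrm{unif}}(S)\leq b_1/T$ gives \eqref{eq:first-branch-conductance-separation}, and substituting $M_1\leq e^{2\eta}$ gives its second form.

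The main obstacle is the independence-sampler bound. The reverse proposal must also use the cut at $b_1$ with the same probability $1/k$, so the cut ratio is exactly one. The $\min$ must then be handled by bounding both of its terms below by $\Pi_T(r')/M_1$ via $w\leq M_1$. A second point to check is the convention on cut indices: a cut at $m=b_1$ redraws the token at depth $b_1$, which determines $r_1$, while no cut with $m>b_1$ can change it.
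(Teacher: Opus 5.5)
Your proposal is correct and follows the paper's proof essentially step for step. The common steps are the state-independent cut probability $1/k$ at $b_1$, the independence-sampler lower bound via $w\le M_1$, and the first-branch bottleneck set whose escape probability is at most $b_1/T$. You bound both arms of the $\min$ at once where the paper splits into the cases $\omega(r')\le\omega(r)$ and $\omega(r')>\omega(r)$, and your case split for choosing $S$ is unnecessary since with $d_1\ge 2$ some single class already has $\Pi_T(S_i)\le 1/d_1\le 1/2$ (positivity follows from Item~1); neither difference is substantive.
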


\begin{proof}[Proof of \cref{lem:conductance-bounds-symmetric}]
We first show that entropy-cut chooses uniformly among the semantic branch positions.
By Item~3 of \cref{cond:approx-symmetric-reasoning-tree}, the positive entropy jumps occur exactly at branch nodes and have the same value $\Delta$.
Thus, for every root-to-leaf path $\vec{x}$,
\[
    \Delta_t(\vec{x}) =
    \begin{cases}
        \Delta\,, & t\in\inbrace{b_1,\dots,b_k}\,,\\
        0\,, & \text{otherwise}\,.
    \end{cases}
\]
Therefore, for any $\beta>0$,
\[
    \lambda_\beta(b_j;\vec{x})
    =
    \frac{\Delta^\beta}{\sum_{i=1}^k \Delta^\beta}
    =
    \frac{1}{k}
    \qquad
    \text{for each } j\in\inbrace{1,\dots,k}\,.
\]
In particular, entropy-cut chooses the first branch depth $b_1$ with probability $\nfrac{1}{k}$.

Now condition on the event that entropy-cut chooses $m=b_1$.
Because all tokens before the first branch are deterministic in the branch-choice representation, this cut redraws the entire branch-choice sequence by sampling
\[
    x(r')_{b_1:T}
    \sim
    p_{\mathrm{prop}}\inparen{\cdot\mid x^\circ_{0:b_1-1}}\,.
\]
Moreover, since $\lambda_\beta(b_1;\vec{x})=\nfrac{1}{k}$ for every state $\vec{x}$, the cut-distribution correction in the MH ratio cancels.
Thus, conditional on choosing $m=b_1$, the transition is the independence Metropolis--Hastings kernel with target $\Pi_T$ and proposal mass
$p_{\mathrm{prop}}\inparen{x(r')_{b_1:T}\mid x^\circ_{0:b_1-1}}.$
We claim that this conditional kernel satisfies the minorization
\[
    K_{b_1}(r,\cdot)
    \geq
    \frac{1}{M_1}\Pi_T(\cdot)
    \qquad
    \text{for every } r\in\Omega\,,
    \yesnum\label{eq:minorization-first-branch-kernel}
\]
where $K_{b_1}$ denotes the MH kernel conditional on cutting at $b_1$.
To prove this, define
\[
    \omega(r)
    \coloneqq
    \frac{\Pi_T(r)}
    {p_{\mathrm{prop}}\inparen{x(r)_{b_1:T}\mid x^\circ_{0:b_1-1}}}\,.
\]
By definition, $\omega(r)\leq M_1$ for every $r$.
For two states $r,r'\in\Omega$, the accepted proposal mass that $K_{b_1}$ assigns to $r'$ is
\[
    p_{\mathrm{prop}}\inparen{x(r')_{b_1:T}\mid x^\circ_{0:b_1-1}}
    \min\inbrace{
        1,\,
        \frac{\Pi_T(r')\,
        p_{\mathrm{prop}}\inparen{x(r)_{b_1:T}\mid x^\circ_{0:b_1-1}}}
        {\Pi_T(r)\,
        p_{\mathrm{prop}}\inparen{x(r')_{b_1:T}\mid x^\circ_{0:b_1-1}}}
    }\,.
\]
Equivalently, this is
\[
    p_{\mathrm{prop}}\inparen{x(r')_{b_1:T}\mid x^\circ_{0:b_1-1}}
    \min\inbrace{1,\frac{\omega(r')}{\omega(r)}}\,.
\]
If $\omega(r')\leq \omega(r)$, then the accepted proposal mass equals
\[
    p_{\mathrm{prop}}\inparen{x(r')_{b_1:T}\mid x^\circ_{0:b_1-1}}
    \frac{\omega(r')}{\omega(r)}
    =
    \frac{\Pi_T(r')}{\omega(r)}
    \geq
    \frac{1}{M_1}\Pi_T(r')\,.
\]
If $\omega(r')>\omega(r)$, then the accepted proposal mass equals
\[
    p_{\mathrm{prop}}\inparen{x(r')_{b_1:T}\mid x^\circ_{0:b_1-1}}
    =
    \frac{\Pi_T(r')}{\omega(r')}
    \geq
    \frac{1}{M_1}\Pi_T(r')\,.
\]
Thus the accepted proposal mass dominates $M_1^{-1}\Pi_T(r')$ for every $r'$.
Rejections only add mass to the current state, so \eqref{eq:minorization-first-branch-kernel} follows.
Since the full entropy-cut kernel uses $K_{b_1}$ with probability $\nfrac{1}{k}$, we obtain
\[
    P_{\mathrm{ec}}(r,\cdot)
    \geq
    \frac{1}{M_1k}\Pi_T(\cdot)
    \qquad
    \text{for every } r\in\Omega\,.
    \yesnum\label{eq:minorization-entropy-cut}
\]
Consequently, for every $A\subseteq \Omega$ with $0<\Pi_T(A)\leq 1/2$, and every $r\in A$,
\[
    P_{\mathrm{ec}}(r,A^c)
    \geq
    \frac{1}{M_1k}\Pi_T(A^c)
    \geq
    \frac{1}{2M_1k}\,.
\]
Averaging over $r\sim \Pi_T(\cdot\mid A)$ gives
$ \Phi_{\mathrm{ec}}(A)
    \geq
    \nfrac{1}{2M_1k}.$
Since this holds for every such $A$,
$\Phi_{\mathrm{ec}}
    \geq
    \nfrac{1}{2M_1k}.$
Using \eqref{eq:M-first-branch-bound}, we further get
$\Phi_{\mathrm{ec}}
    \geq
    \nfrac{e^{-2\eta}}{2k}.$

We next exhibit a low-conductance set for the uniform-cut chain.
For $i\in[d_1]$, let $S_i\subseteq \Omega$ be the set of paths whose first branch choice is $i$.
These sets partition $\Omega$, so there exists $i_\star\in[d_1]$ such that
$ \Pi_T(S_{i_\star})\leq \nfrac{1}{d_1}\leq \nfrac{1}{2}.$
Set $S\coloneqq S_{i_\star}$.
Then $0<\Pi_T(S)\leq \nfrac{1}{2}$.

Under the uniform-cut chain, the first branch can change only if the cut occurs at or before depth $b_1$.
This happens with probability $b_1/T$.
Any later cut preserves the first branch choice, and any rejection also leaves the state unchanged.
Therefore, for every $r\in S$,
$P_{\mathrm{unif}}(r,S^c)
    \leq
    \frac{b_1}{T}.$
Averaging over $r\sim \Pi_T(\cdot\mid S)$ gives
$ \Phi_{\mathrm{unif}}(S)
    \leq
    \frac{b_1}{T}.$
Finally, applying the entropy-cut lower bound to this same set $S$, and using the uniform-cut upper bound, gives
\[
    \Phi_{\mathrm{ec}}(S)
    \geq
    \frac{1}{2M_1k}
    \geq
    \frac{T}{2M_1kb_1}\,\Phi_{\mathrm{unif}}(S)
    \geq
    \frac{e^{-2\eta}T}{2kb_1}\,\Phi_{\mathrm{unif}}(S)\,.\qedhere{}
\]
\end{proof}

\paragraph{Step C (From Conductance to Mixing Time).}
We now convert the above bounds into mixing-time estimates.
For entropy-cut, we use the stronger minorization \eqref{eq:minorization-entropy-cut}.
Let
$\rho\coloneqq \frac{1}{M_1k}.$
Then \eqref{eq:minorization-entropy-cut} implies that for every $r\in\Omega$,
$P_{\mathrm{ec}}(r,\cdot)
    \geq
    \rho\,\Pi_T(\cdot).$
Equivalently, each step of the entropy-cut chain has probability at least $\rho$ of refreshing exactly from stationarity.
Hence, after $n$ steps,
\[
    \sup_{r\in\Omega}
    \tv{P_{\mathrm{ec}}^n(r,\cdot)}{\Pi_T}
    \leq
    (1-\rho)^n
    \leq
    \exp(-\rho n)\,.
\]
Therefore, if
$ n
    \geq
    M_1k\log\nfrac{1}{\eps},$
then the total-variation distance is at most $\eps$.
Using $M_1\leq e^{2\eta}$ from \eqref{eq:M-first-branch-bound}, we obtain
\[
    \tau_{\mathrm{mix}}^{\mathrm{ec}}(\eps)
    \leq
    e^{2\eta} k\log\nfrac{1}{\eps}\,.
    \yesnum\label{eq:entropy-cut-mixing-upper-bound}
\]
It remains to lower bound the mixing time of the uniform-cut chain.
Start the uniform-cut chain from any state $r^{(0)}\in S$, where $S$ is the first-branch set constructed in the proof of \cref{lem:conductance-bounds-symmetric}.
Until the first branch is reopened, the chain must remain in $S$.
Under uniform cuts, the first branch is reopened at each step with probability $b_1/T$.
Hence,
\[
    P_{\mathrm{unif}}^n(r^{(0)},S)
    \geq
    \inparen{1-\frac{b_1}{T}}^n.
    \yesnum\label{eq:uniform-survival-in-S}
\]
Total variation distance is at least the discrepancy on the event $S$, so
\[
    \tv{P_{\mathrm{unif}}^n(r^{(0)},\cdot)}{\Pi_T}
    \geq
    P_{\mathrm{unif}}^n(r^{(0)},S)-\Pi_T(S)
    \geq
    \inparen{1-\frac{b_1}{T}}^n-\frac{1}{2}\,.
    \yesnum\label{eq:uniform-tv-lower-bound}
\]
Fix $\eps\in\inparen{0,\nfrac{1}{2}}$ and define
\[
    c_\eps
    \coloneqq
    \frac{1}{4}
    \log\frac{1}{\eps+\nfrac{1}{2}}
    >0\,.
\]
Since $b_1\leq T/2$, we have
$ -\log\inparen{1-\nfrac{b_1}{T}}
    \leq
    2\nfrac{b_1}{T}.$
Therefore, for every
$n
    \leq
    c_\eps \nfrac{T}{b_1},$
we have
\[
    \inparen{1-\frac{b_1}{T}}^n
    \geq
    \exp\inparen{-2c_\eps}
    =
    \sqrt{\eps+\nfrac{1}{2}}\,.
\]
Since $\eps<1/2$, we have
$   \sqrt{\eps+\nfrac{1}{2}}-\nfrac{1}{2}
    >
    \eps.$
Plugging this into \eqref{eq:uniform-tv-lower-bound} gives
$\tv{P_{\mathrm{unif}}^n(r^{(0)},\cdot)}{\Pi_T}
    >
    \eps$
for all $n\leq c_\eps T/b_1$.
Thus,
$\tau_{\mathrm{mix}}^{\mathrm{unif}}(\eps)
    =
    \Omega_\eps\!\inparen{\nfrac{T}{b_1}}.$
Together with \eqref{eq:entropy-cut-mixing-upper-bound}, this proves the theorem, with the entropy-cut upper bound carrying the factor $e^{2\eta}$ under the symmetry condition as stated.
\end{proof}

\section{Additional Experiments and Details}\label{apx:experiments}

\subsection{Benchmark Details}
\label{sec:appendix:benchmark-details}
In this section, we describe the datasets we use.
\begin{itemize}
    \item \textbf{MATH500}: The MATH dataset~\citep{lightman2024verify} is a set of competition math problems spanning seven categories, including geometry, number theory, and precalculus. 
    There are 12500 total problems, of which 5000 are test problems.
    MATH500 is a particular randomly chosen subset of the test set standardized by OpenAI.
    \item \textbf{HumanEval}: HumanEval~\citep{chen2021evaluating} consists of 164 handwritten programming problems covering algorithms, reasoning, mathematics, and language comprehension. Each problem comes with associated unit tests (7.7 on average) and a solution must pass all tests to be considered correct.
    \item \textbf{GPQA Diamond}: GPQA~\citep{rein2024gpqa} is a dataset of graduate-level multiple-choice science questions. %
    GPQA Diamond is a subset of 198 questions of the highest quality.
    \item \textbf{AIME26}: The AIME 2026~\citep{aime2026} dataset comprises 30 high-school olympiad-level mathematical problems sourced from the 2026 American Invitational Mathematics Examination (AIME I and AIME II). The dataset spans algebra, geometry, number theory, and combinatorics, with each problem requiring multi-step logical deduction.
\end{itemize}

\subsection{Experimental Details}\label{apx:experiments:details}
Here we detail the hyperparameters for the baselines.

\paragraph{SMC Hyperparameters.}
The most important hyperparameter for SMC is the number of particles $N$, which \citet{azizi2026power} sets to be $N=64$.
All other hyperparameters, such as the generation length $T$, power exponent $\alpha$, etc., are not specific to SMC, and we take them to be the same as entropy-cut MH (\Cref{sec:experiments:setup}).

\paragraph{TMC Hyperparameters.}
TMC has a few more hyperparameters, which we set following \citet{ji2026scalable}.
Specifically, we take the block-length (look-ahead length) to be $B=192$, the number of candidate sequences $K = 8$, and the number of look-ahead completions per candidate to be $M=8$.
Similar to SMC, all other hyperparameters are not specific to TMC, and we take them to be the same as entropy-cut MH (\Cref{sec:experiments:setup}).

\subsection{Likelihood and Confidence}\label{apx:experiments:likelihood-confidence}
The sequence log-likelihood and average \emph{confidence}~\citep{prabhudesai2025maximizing} of a sequence with respect to a model are defined respectively as
\[
    \textstyle
    \operatorname{LL}(x_{0:T}) = \sum_{t=0}^T \log p(x_t\mid x_{<t})\,,
    \qquad
    \operatorname{Conf}(x_{0:T}) = \frac1{T+1} \sum_{t=0}^T \sum_{v\in V} p(v\mid x_{<t})\log p(v\mid x_{<t})\,.
\]
We plot the distribution of the sequence log-likelihood values computed via the base model (\texttt{Qwen2.5-Math-7B}) on the MATH500 dataset across 8 random seeds in \Cref{fig:SequenceLogprobs}.
Similarly, we plot the average confidence in \Cref{fig:AverageConfidence}.
In addition, the 25th, 50th (median), and 75th percentile values are marked with dashed, solid, and dotted vertical lines.
We observe that entropy-cut MH samples from higher-likelihood regions than standard, low-temperature, and uniform-cut MH, and from regions similar to those sampled by SMC and TMC.

\begin{figure}[htbp]
    \centering
    \includegraphics[width=\linewidth]{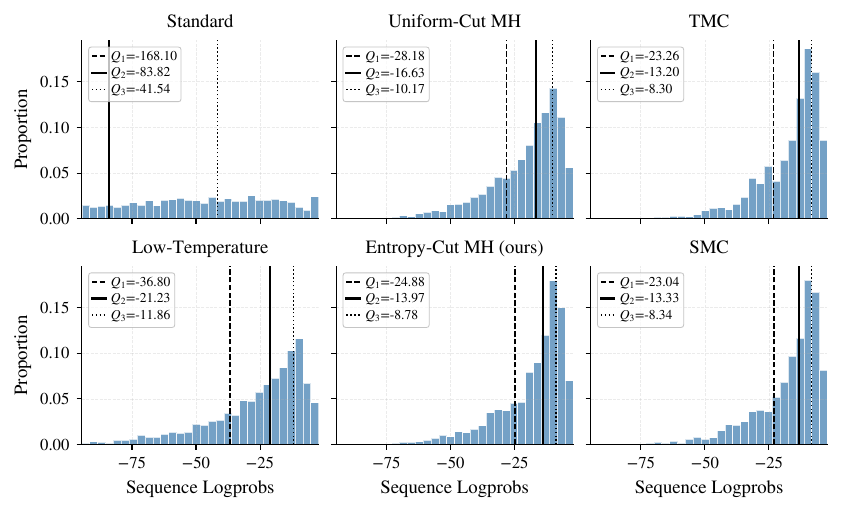}
    \caption{\textbf{Entropy-Cut MH (ours) samples from high-probability regions.} We plot \texttt{Qwen2.5-Math-7B} log-likelihoods for MATH500 responses across samplers.}
    \label{fig:SequenceLogprobs}
\end{figure}

\begin{figure}[hbtp]
    \centering
    \includegraphics[width=\linewidth]{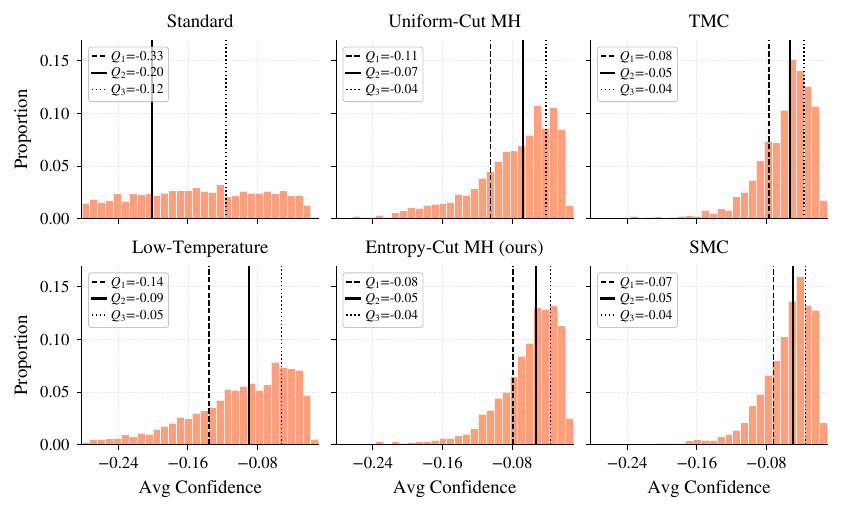}
    \caption{\textbf{Entropy-Cut MH (ours) samples from high-confidence regions.} We plot \texttt{Qwen2.5-Math-7B} average confidence values for MATH500 responses across samplers.}
    \label{fig:AverageConfidence}
\end{figure}

\subsection{Ablations}\label{apx:experiments:ablations}
We demonstrate the stability of our algorithm against the specific choices of hyperparameters.
Throughout these ablations, we fix the model to be \texttt{Qwen2.5-Math-7B}, the MATH500 dataset, and all default hyperparameter settings described in \Cref{sec:experiments:setup}, while varying the parameter in question.
We report the average score across 8 repetitions.

\paragraph{Power Distribution Exponent ($\alpha$).}
We plot the MATH500 scores across varying values of $\alpha$ in \Cref{fig:AblationAlpha}.
At the extreme $\alpha=1$, this is equivalent to the standard sampling algorithm.
As noted by \citet{karan2026reasoning}, a higher $\alpha$ is correlated with better reasoning, but directly optimizing for likelihood is not necessarily optimal for sampling.
Nonetheless, we see that the accuracy of entropy-cut MH remains relatively stable beyond $\alpha\geq 2.0$.

\begin{figure}[tb!]
    \centering
    \includegraphics[width=0.4\linewidth]{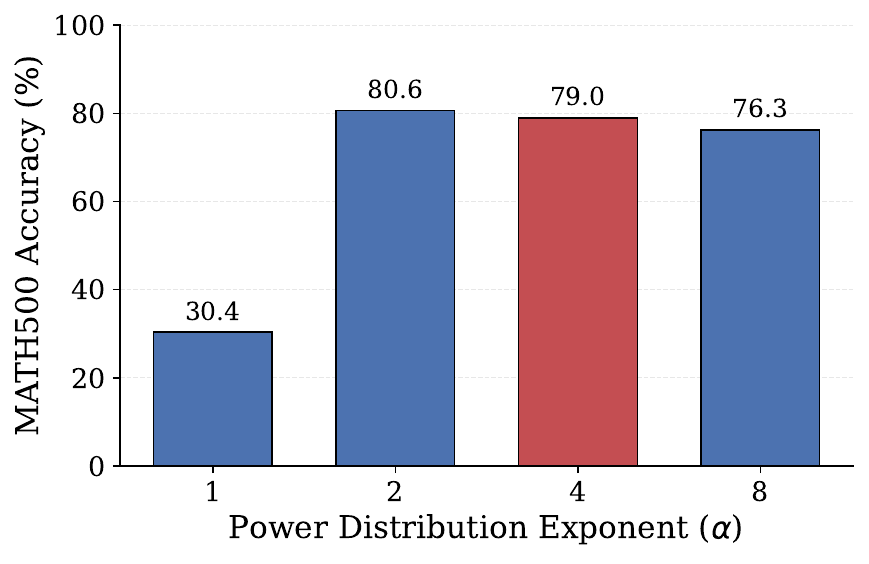}
    \caption{\textbf{Entropy-Cut MH is stable to the choice of power exponent $\alpha$.} We plot \texttt{Qwen2.5-Math-7B} scores for MATH500 across various values of $\alpha$ generated by entropy-cut MH.}
    \label{fig:AblationAlpha}
\end{figure}

\paragraph{Cut Law Exponent ($\beta$).}
Similarly, \Cref{fig:AblationBeta} plots the MATH500 scores across various values of $\beta$.
The accuracy seems similarly stable for values of $\beta\geq 2.0$.

\begin{figure}[tb!]
    \centering
    \includegraphics[width=0.4\linewidth]{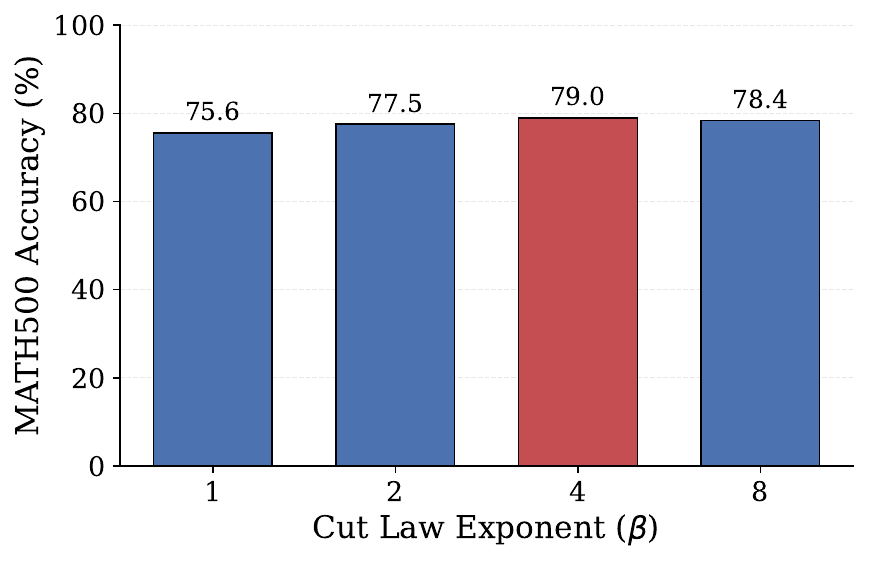}
    \caption{\textbf{Entropy-Cut MH is stable to the choice of cut law exponent $\beta$.} We plot \texttt{Qwen2.5-Math-7B} scores for MATH500 across various values of $\beta$ generated by entropy-cut MH.}
    \label{fig:AblationBeta}
\end{figure}

\paragraph{Metropolis--Hastings Steps ($N_{\mathrm{MCMC}}$).}
In \Cref{fig:AblationMHSteps}, we plot the MATH500 scores across different numbers of $N_{\mathrm{MCMC}}$.
While increasing the number of steps increases the performance as expected, the increase past $10$ steps seems to be relatively minor.

\begin{figure}[tb!]
    \centering
    \includegraphics[width=0.4\linewidth]{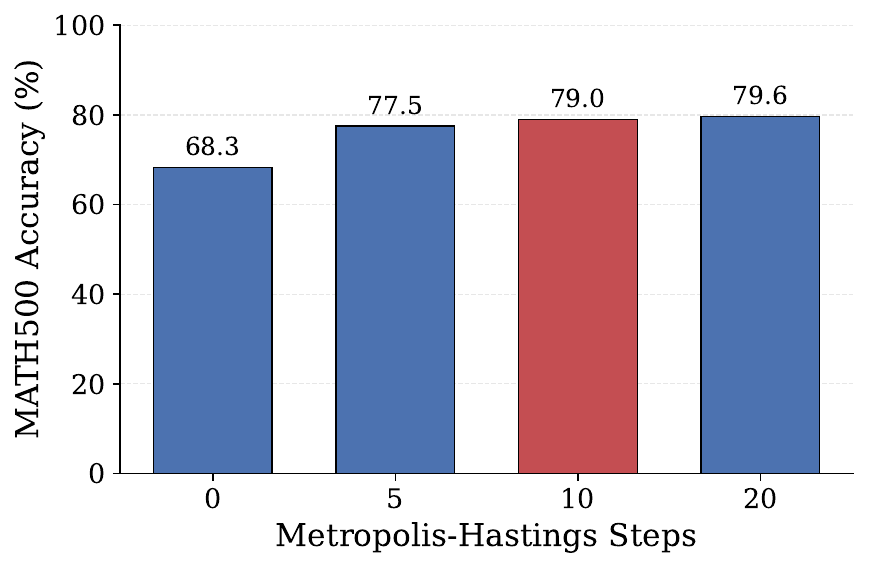}
    \caption{\textbf{Entropy-Cut MH improves with additional transition steps $N_{\mathrm{MCMC}}$.} We plot \texttt{Qwen2.5-Math-7B} scores for MATH500 across various values of $N_{\mathrm{MCMC}}$ generated by entropy-cut MH.}
    \label{fig:AblationMHSteps}
\end{figure}

\subsection{Running Times}
To demonstrate the scalability of our method, we report the average running time to generate the solution to a MATH500 question using \texttt{Qwen2.5-Math-7B} for the various (power) samplers in \Cref{fig:RunningTime}.
As expected, standard and low-temperature sampling are an order of magnitude faster; uniform-cut MH, entropy-cut MH, and SMC have comparable costs. TMC is the most expensive algorithm, requiring nearly double the cost of other samplers.

\begin{figure}[tb!]
    \centering
    \includegraphics[width=0.4\linewidth]{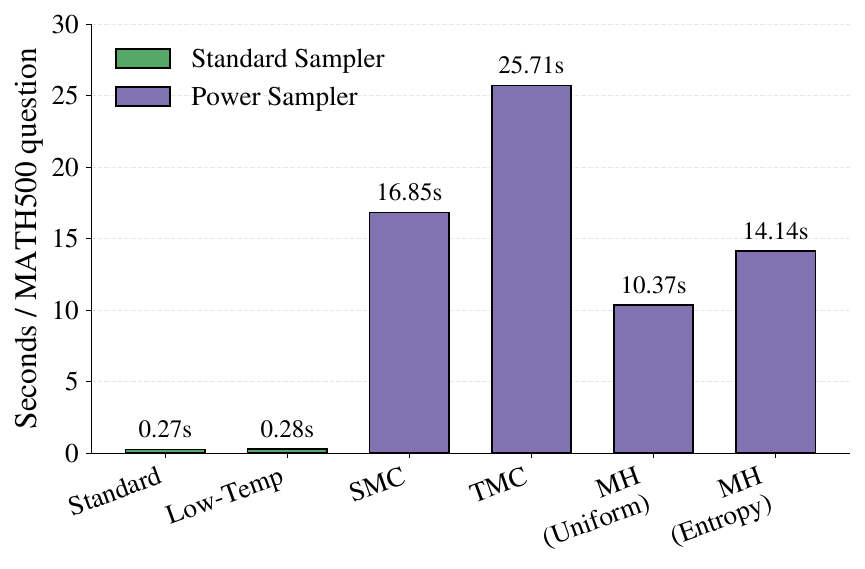}
    \caption{\textbf{Entropy-Cut MH has running times comparable to other power sampling methods.} We plot \texttt{Qwen2.5-Math-7B} average time to generate a solution per MATH500 question across various (power) sampling algorithms.}
    \label{fig:RunningTime}
\end{figure}

\subsection{Prompts}
For each dataset except HumanEval, we have two sets of prompts.
We use one for the Qwen family of base models (\texttt{Qwen2.5-7B}, \texttt{Qwen2.5-Math-7B}, \texttt{Qwen3-8B-Base}), which does not use a chat template, and another for the Phi family of instruction-tuned models (\texttt{Phi-3.5-mini-instruct}, \texttt{Phi-4-mini-instruct}), which uses the respective chat templates.

\subsubsection{MATH500}
\paragraph{Qwen Family.}
We use the following prompt template where \texttt{\{Question\}} is replaced with the specific MATH500 question.
\begin{lstlisting}
Can you solve the following math problem? Please reason step by step, and put your final answer within \\boxed{{}}. 

{Question}

Remember to present your final answer within \\boxed{{}}!"
\end{lstlisting}

\paragraph{Phi Family.}
We format the following messages using the respective model chat templates where \texttt{\{Qwen Prompt\}} is replaced with the prompt above.
\begin{lstlisting}
[
    {"role": "system", "content": "You are an AI math expert."},
    {"role": "user", "content": {Qwen Prompt}},
]
\end{lstlisting}

\subsubsection{HumanEval}
For HumanEval, we directly provide the model with the code stub and enforce the following stop words.
\begin{lstlisting}
[
    "\nclass",
    "\ndef",
    "\n#",
    "\nif",
    "\nprint",
    "\nassert",
    "\nimport",
    "\nfrom",
    "\n```",
    "if __name__",
]
\end{lstlisting}

\subsubsection{GPQA Diamond}
\paragraph{Qwen Family.}
For fairness, we randomly permute each GPQA Diamond multiple-choice answer among the choices and use the following prompt template.
Here \texttt{\{Question\}} is replaced with the specific GPQA Diamond question and \texttt{\{A\}}, \texttt{\{B\}}, \texttt{\{C\}}, and \texttt{\{D\}} are replaced with the actual multiple-choice responses.
\begin{lstlisting}
Answer the following multiple-choice question. The last line of your response should be of the following format: '\\boxed{{$LETTER}}' (without quotes) where LETTER is one of ABCD. Think step by step before answering.

{Question}

A) {A}
B) {B}
C) {C}
D) {D}
\end{lstlisting}

\paragraph{Phi Family.}
We format the following messages using the respective model chat templates where \texttt{\{Qwen Prompt\}} is replaced with the prompt above.
\begin{lstlisting}
[
    {"role": "user", "content": {Qwen Prompt}},
]
\end{lstlisting}

\subsubsection{AIME26}
For AIME26, we use the same prompt templates as MATH500 for both the Qwen and Phi families.
 
\end{document}